\def\N{\mathcal{N}}
\newcommand{\dimv}{\mathrm{dim}}
\newcommand{\x}{\mathbf x}
\newcommand{\y}{\mathbf y}
\def\S{\mathcal{S}}
\def\Bset{\mathcal{B}}
\def\I{\mathbf{I}}
\def \E {\mathcal{E}}
\newcommand{\Id}{\mathbf{I}}
\newcommand{\1}{\mathbf{1}}
\newcommand{\BB}{\mathbb B} 
\newcommand{\GG}{\mathfrak G} 
\newcommand{\G}{\mathcal G} 
\newcommand{\ngroups}{M}
\DeclareMathOperator{\supp}{supp}
\def\x{{\mathbf x}}
\newtheorem{prob}{Problem}
\newtheorem{rem}{Remark}
\newcommand{\bigO}{\mathcal O}
\newcommand{\Real}{\mathbb R}
\newcommand{\A}{\mathbf{A}}
\newtheorem{cor}{Corollary}
\newtheorem{prop}{Proposition}
\newtheorem{definition}{Definition}
\renewcommand{\b}{\mathbf b}
\newtheorem{theorem}{Theorem}
\newtheorem{lemma}{Lemma}
\newcommand{\bitem}{\begin{itemize}}
\newcommand{\eitem}{\end{itemize}}
\DeclareMathOperator*{\argmax}{argmax}        
\DeclareMathOperator*{\argmin}{argmin}
\newcommand{\beqn}{\begin{equation}}
\newcommand{\eeqn}{\end{equation}}
\newcommand{\balign}{\begin{align}}
\newcommand{\ealign}{\end{align}}
\def \dim {N}      
\begin{document}
\title{Group-Sparse Model Selection: Hardness and Relaxations} 

\author{Luca Baldassarre, Nirav Bhan, Volkan Cevher, Anastasios Kyrillidis and Siddhartha Satpathi\thanks{This work was supported in part by the European Commission under Grant MIRG-268398, ERC Future Proof and SNF 200021-132548.} \thanks{LB and VC are with LIONS, EPFL, Lausanne, Switzerland (\{luca.baldassarre, volkan.cevher\}@epfl.ch); NB is with LIDS, MIT (niravb@mit.edu); AK is with the WNCG group at University of Texas at Austin (anastasios@utexas.edu) and SS is at IIT Kharagpur (sidd.piku@gmail.com).}\thanks{Authors are listed in alphabetical order}}

\maketitle


\begin{abstract}
Group-based sparsity models are proven instrumental in linear regression problems for recovering signals from much fewer measurements than standard compressive sensing. 
The main promise of these models is the recovery of ``interpretable" signals through the identification of their constituent groups.  
In this paper, we establish a combinatorial framework for group-model selection problems and highlight the underlying tractability issues. 
In particular, we show that the group-model selection problem is equivalent to the well-known NP-hard weighted maximum coverage problem (WMC).
Leveraging a graph-based understanding of group models, we describe group structures which enable correct model selection in polynomial time via dynamic programming.
Furthermore, group structures that lead to totally unimodular constraints have tractable discrete as well as convex relaxations. 
We also present a generalization of the group-model that allows for within group sparsity, which can be used to model hierarchical sparsity.
Finally, we study the Pareto frontier of group-sparse approximations for two tractable models, among which the tree sparsity model, and illustrate selection and computation trade-offs between our framework and the existing convex relaxations. 

\end{abstract}

\begin{IEEEkeywords}
Signal Approximation, Structured Sparsity, Interpretability, Tractability, Dynamic Programming, Compressive Sensing.
\end{IEEEkeywords}



\section{Introduction}

\IEEEPARstart{I}{nformation} in many natural and man-made signals can be exactly represented or well approximated by a sparse set of nonzero coefficients in an appropriate basis \cite{mallat1999wavelet}. Compressive sensing (CS) exploits this fact to recover signals from their compressive samples, which are dimensionality reducing, non-adaptive random measurements. According to the CS theory, the number of measurements for stable recovery is proportional to the signal sparsity, rather than to its Fourier bandwidth as dictated by the Shannon/Nyquist theorem \cite{donoho2006compressed, candes2006compressive, baraniuk2007compressive}. Unsurprisingly, the utility of sparse representations also goes well-beyond CS and permeates a lot of fundamental problems in signal processing, machine learning, and theoretical computer science.


Recent results in CS extend the simple sparsity idea to consider more sophisticated {\em structured} sparsity models, which describe the interdependency between the nonzero coefficients  \cite{eldar2009robust, blumensath2009sampling, baraniuk2010model, rao2012signal}. There are several compelling reasons for such extensions:
The structured sparsity models allow to significantly reduce the number of required measurements for perfect recovery in the noiseless case and be more stable in the presence of noise. Furthermore, they facilitate the interpretation of the signals in terms of the chosen structures, revealing information that could be used to better understand their properties. 

An important class of  structured sparsity models is based on groups of variables that should either be selected or discarded together \cite{baraniuk2010low, jenatton2011structured, obozinski2011group, rao2012signal, rao2011convex}.
These structures naturally arise in applications such as neuroimaging \cite{gramfort2009improving,jenatton2011multiscale}, gene expression data \cite{subramanian2005gene,obozinski2011group}, bioinformatics \cite{rapaport2008classification, zhou2010association} and computer vision \cite{cevher2009sparse,baraniuk2010model}. 
For example, in cancer research, the groups might represent genetic pathways that constitute cellular processes. 
Identifying which processes lead to the development of a tumor can allow biologists to directly target certain groups of genes instead of others \cite{subramanian2005gene}. 
Incorrect identification of the active/inactive groups can thus have a rather dramatic effect on the speed at which cancer therapies are developed.

In this paper, we consider {\em group-based} sparsity models, denoted as $\GG$. These structured sparsity models feature collections of groups of variables that could overlap arbitrarily, that is $\GG = \{\G_1, \ldots, \G_M\}$ where each $\G_j$ is a subset of the index set $\{1, \ldots, N\}$, with $N$ being the dimensionality of the signal that we model. Arbitrary overlaps mean that we do not restrict the intersection between any two sets from $\GG$. 

We address the {\em signal approximation}, or projection, problem based on a known group structure $\GG$. That is, given a signal ${\bf x} \in \Real^N$, we seek an $\hat{\bf x}$ closest to it in the Euclidean sense, whose {\em support} (i.e., the  index set of its non-zero coefficients) consists of the union of at most $G$ groups from $\GG$, where $G > 0$ is a user-defined group budget:
$$
\hat{\bf x} \in \argmin\limits_{{\bf z} \in \Real^N} \left \{ \|{\bf x} - {\bf z}\|_2^2 : \supp({\bf z}) \subseteq \bigcup_{\G \in \mathcal{S}} \G, \mathcal{S} \subseteq \GG, |\mathcal{S}| \leq G \right \},
$$
where $\supp({\bf z})$ is the support of the vector ${\bf z}$.  
We call such an approximation as {\em G-group-sparse} or in short {\em group-sparse}. 
The projection problem is a fundamental step in Model-based Iterative Hard-Thresholding algorithms for solving inverse problems by imposing group structures \cite{baraniuk2010model, bah2014model}.

More importantly, we seek to also identify the {\em G-group-support} of the approximation $\hat{{\bf x}}$, that is the $G$ groups that constitute its support. 
We call this the group-sparse {\em model selection} problem.
The G-group-support of $\hat{{\bf x}}$ allows us to ``interpret'' the original signal and discover its properties so that we can, for example, target specific groups of genes instead of others \cite{subramanian2005gene} or focus more precise  imaging techniques on certain brain regions only \cite{michel2011total}.
In this work, we study under which circumstances we can correctly and tractably identify the $G$-group-support of the approximation of a given signal. 
In particular, we show that this problem is equivalent to an NP-hard combinatorial problem known as the weighted maximum coverage problem and we propose a novel polynomial time algorithm for finding its solutions for a certain class of group structures.

If the original signal is affected by noise, i.e., if instead of ${\bf x}$, we measure ${\bf z} := {\bf x} + \boldsymbol{\varepsilon}$, where $\boldsymbol{\varepsilon}$ is some random noise, the $G$-group support of $\hat{\bf z}$ may not exactly correspond to the one of $\hat{\bf x}$. 
Although this is a paramount statistical issue, here we are solely concerned with the computational problem of finding the $G$-group support of a given signal, irrespective of whether it is affected by noise or not, because any group-based interpretation would necessarily require such computation.

{\bf Previous work.} 
Recent works in compressive sensing and machine learning with group sparsity have mainly focused on leveraging group structures for lowering the number of samples required for recovering signals \cite{stojnic2009reconstruction, eldar2009robust, blumensath2009sampling, baraniuk2010model, rao2012signal, huang2011learning,jacob2009group, obozinski2011group}. While these results have established the importance of group structures, many of these works have not fully addressed model selection.

For the special case of non-overlapping groups, dubbed the block-sparsity model, the problem of model selection does not present computational difficulties and features a well-understood theory \cite{stojnic2009reconstruction}. 
The first convex relaxations for group-sparse approximation \cite{yuan2006model} considered only non-overlapping groups. 
Its extension to overlapping groups \cite{zhao2009composite}, however, selects supports defined as the complement of a union of groups (see also \cite{jenatton2011structured}), which is the opposite of what applications usually require, where groups of variables need to be selected together, instead of discarded.

For overlapping groups, Eldar et al.\ \cite{eldar2009robust} consider the union of subspaces framework and cast the model selection problem as a block-sparse model selection one by duplicating the variables that belong to overlaps between the groups. 
Their uniqueness condition \cite{eldar2009robust}[Prop.~1], however, is infeasible for any group structure with overlaps, because it requires that the subspaces intersect only at the origin, while two subspaces defined by two overlapping groups of variables intersect on a subspace of dimension equal to the number of elements in the overlap.

The recently proposed convex relaxations \cite{jacob2009group, obozinski2011group} for group-sparse approximations select group-supports that consist of union of groups.
However, the group-support recovery conditions in \cite{jacob2009group, obozinski2011group} should be taken with care, because they are defined with respect to a particular subset of group-supports and are not general. As we numerically demonstrate in this paper, the group-supports recovered with these methods might be incorrect.
Furthermore, the required consistency conditions in \cite{jacob2009group, obozinski2011group} are unverifiable {\em a priori}. For instance, they require tuning parameters to be known beforehand to obtain the correct group-support. 

Huang et al.~\cite{huang2011learning} use coding complexity schemes over sets to encode sparsity structures.
They consider linear regression problems where the coding complexity of the support of the solution is constrained to be below a certain value.
Inspired by Orthogonal Matching Pursuit, they then propose a greedy algorithm, named StructOMP, that leverages a block-based approximation to the coding complexity. 
A particular instance of coding schemes, namely graph sparsity, can be used to encode both group and hierarchical sparsity.
Their method only returns an approximation to the original discrete problem, as we illustrate via some numerical experiments. 

Obozinski and Bach \cite{obozinski2012convex} consider a penalty involving the sum of a combinatorial function $F$ and the $\ell_p$ norm. 
In order to derive a convex relaxation of the penalty, they first find its tightest positive homogeneous and convex lower bound, which is $F(\supp(\x))^{\frac{1}{q}}\|\x\|_p$, with $\frac{1}{p} + \frac{1}{q} = 1$. 
They also consider set-cover penalties, based on the weighted set cover of a set. 
Given a set function $F$, the weighted set cover of a set $\mathcal{A}$ is the minimum sum of weights of sets that are required to cover $\mathcal{A}$.
With a proper choice of the set function $F$, the weighted set cover can be shown to correspond to the group $\ell_0$-``norm'' that we define in the following.
They establish that the latent group lasso norm as defined in \cite{jacob2009group} is the tightest convex relaxation of the function $\x \mapsto \|\x\|_p \tilde{F}(\supp(\x))^{\frac{1}{q}}$, where $\tilde{F}(\supp(\x))$ is the weighted set cover of the support of $\x$. 

In this work, we take a completely discrete approach and do not rely on relaxations.

{\bf Contributions.}
This paper is an extended version of a prior submission to the IEEE International Symposium on Information Theory (ISIT), 2013. 
This version contains all the proofs that were previously omitted due to lack of space, refined explanations of the concepts, and provides the full description of the proposed dynamic programming algorithms. 

In stark contrast to the existing literature, we take an explicitly discrete approach to identifying group-supports of signals given a budget constraint on the number of groups. This fresh perspective enables us to show that the group-sparse model selection problem is NP-hard: if we can solve the group model selection problem in general, then we can solve any weighted maximum coverage (WMC) problem instance in polynomial time. However, WMC is known to be NP-Hard \cite{hochbaum1997approximation}.  
Given this connection, we can only hope to characterize a subset of instances which are tractable or find guaranteed and tractable approximations.

We present group structures that lead to computationally tractable problems via dynamic programming. 
We do so by leveraging a graph-based representation of the groups and exploiting properties of the induced graph. 
In particular, we present and describe a novel polynomial-time dynamic program that solves the WMC problem for a group structures whose induced graph is a tree or a forest. This result could indeed be of interest by itself. 

We identify tractable discrete relaxations of the group-sparse model selection problem that lead to efficient algorithms. 
Specifically, we relax the constraint on the number of groups into a penalty term and show that if the remaining group constraints satisfy a property related to the concept of total unimodularity \cite{wolsey1999integer}, then the relaxed problem can be efficiently solved using linear program solvers. 
Furthermore, if the graph induced by the group structure is a tree or a forest, we can solve the relaxed problem in linear time by the sum-product algorithm \cite{bishop2006pattern}. 

We extend the discrete model to incorporate an overall sparsity constraint and allowing to select individual elements from each group, leading to within-group sparsity. Furthermore, we discuss how this extension can be used to model hierarchical relationships between variables. We present a novel polynomial-time dynamic program that solves the hierarchical model selection problem exactly and discuss a tractable discrete relaxation.

We also interpret the implications of our results in the context of other group-based recovery frameworks. For instance, the convex approaches proposed in \cite{eldar2009robust, jacob2009group, obozinski2011group} also relax the discrete constraint on the cardinality of the group support. However, they first need to decompose the approximation into vector atoms whose support consists only of one group and then penalize the norms of these atoms.
It has been observed \cite{obozinski2011group} that these relaxations produce approximations that are group-sparse, but their group-support might include irrelevant groups. We concretely illustrate these cases via Pareto frontier examples on two different group structures.

{\bf Paper structure.}
The paper is organized as follows. In Section 2, we present definitions of group-sparsity and related concepts, while in Section \ref{sec:tract}, we formally define the approximation and model-selection problems and connect them to the WMC problem. We present and analyze discrete relaxations of the WMC in Section \ref{sec:discrete_relax} and consider convex relaxations in Section \ref{sec:convex_relax}.  
In Section \ref{sec:discrete_vs_convex}, we illustrate via a simple example the differences between the original problem and the relaxations. 
The generalized model is introduced and analyzed in Section \ref{sec:generalizations}, while numerical simulations are presented in Section \ref{sec:exp}. 
We conclude the paper with some remarks in Section \ref{sec:conclusions}. The appendices contain the detailed descriptions of the dynamic programs.

\section{\label{sec:basics} Basic Definitions}

Let  $ {\bf x} \in \Real^\dim$ be a vector, with $\dimv(\x) = \dim$, and $\N = \{1, \ldots, \dim\}$ be the ground set of its indices. 
We use $|\mathcal{S}|$ to denote the cardinality of an index set $\mathcal{S}$. 
Given a vector ${\bf x} \in \Real^\dim$ and a set $\mathcal{S}$, we define ${\bf x}_\mathcal{S} \in \Real^{|\mathcal{S}|}$, such that the components of ${\bf x}_\mathcal{S}$ are the components of ${\bf x}$ indexed by $\mathcal{S}$.
We use $\mathbb{B}^N$ to represent the space of $N$-dimensional binary vectors and define $\iota: \Real^N \to \mathbb{B}^N$ to be the indicator function of the nonzero components of a vector in $\Real^N$, i.e., $\iota({\bf x})_i = 1$ if $x_i \neq 0$ and $\iota({\bf x})_i = 0$, otherwise.
We let $\1_N$ to be the $N$-dimensional vector of all ones and $\Id_N$ the $N \times N$ identity matrix. 
The support of ${\bf x}$ is defined by the set-valued function $\supp({\bf x}) = \{i \in \N : x_i \neq 0 \}$.
Note that we normally use bold lowercase letters to indicate vectors and bold uppercase letters to indicate matrices. 

We start with the definition of totally unimodularity, a property of matrices that will turn out to be key for obtaining efficient relaxations of integer linear programs.

\begin{definition}
A {\bf totally unimodular matrix} (TU matrix) is a matrix for which every square non-singular submatrix has determinant equal to $-1$ or $1$. 
\end{definition}

We now define the main building block of group sparse model selection, the group structure.
\begin{definition}
A {\bf group structure} $\mathfrak{G} = \{\G_1, \ldots, \G_\ngroups\}$ is a collection of index sets, named {\em groups}, with $\G_j \subseteq \N$ and $|\G_j| = g_j$ for $ 1 \leq j \leq \ngroups$ and $\bigcup_{\G \in \mathfrak{G}} \G = \N$. 
\end{definition}

We can represent a group structure $\GG$ as a bipartite graph, where on one side we have the $N$ variables nodes and on the other the $M$ group nodes. 
An edge connects a variable node $i$ to a group node $j$ if $i \in \G_j$. 
Fig.~\ref{fig:var_groups} shows an example.
The bi-adjacency matrix $\A^\mathfrak{G} \in \mathbb{B}^{N \times \ngroups}$ of the bipartite graph encodes the group structure,
$$
\label{eq:group_structure}
A^\mathfrak{G}_{ij} = \bigg \{ \begin{array}{lc} 1, & \text{if}~i \in \G_j; \\ 0, & \text{otherwise.} \end{array} \; 
$$

\tikzstyle{vnode}=[circle,draw=black,fill=white,thick, minimum size=6pt, inner sep=0pt]
\tikzstyle{vnodeholder}=[circle,draw=black,fill=white,thick, minimum size=1pt, inner sep=0pt]
\tikzstyle{gnode}=[rectangle,draw=black,fill=white,thick, minimum size=6pt, inner sep=0pt]
\tikzstyle{gnodeholder}=[rectangle,draw=black,fill=white,thick, minimum size=1pt, inner sep=0pt]

\begin{figure}
\centering
\begin{tikzpicture}[-,>=stealth',shorten >=1pt,auto,node distance=1.5cm, semithick]

\node[vnode] (v1) at (0,0) [label=above:$1$] [label=left:variables] {};
\node[vnode] (v2) at (1,0) [label=above:$2$] {};
\node[vnode] (v3) at (2,0) [label=above:$3$] {};
\node[vnode] (v4) at (3,0) [label=above:$4$] {};
\node[vnode] (v5) at (4,0) [label=above:$5$] {};
\node[vnode] (v6) at (5,0) [label=above:$6$] {};
\node[vnode] (v7) at (6,0) [label=above:$7$] {};
\node[vnode] (v8) at (7,0) [label=above:$8$] {};

\node[gnode] (g1) at (0,-1.8) [label=below:$\G_1$] [label=left:groups] {};
\node[gnode] (g2) at (1.4,-1.8) [label=below:$\G_2$] {};
\node[gnode] (g3) at (2.8,-1.8) [label=below:$\G_3$] {};
\node[gnode] (g4) at (4.2,-1.8) [label=below:$\G_4$] {};
\node[gnode] (g5) at (5.6,-1.8) [label=below:$\G_5$] {};
\node[gnode] (g6) at (7,-1.8) [label=below:$\G_6$] {};
	
\draw (v1) to (g1);\draw (g1) to (v1);\draw (v2) to (g2);\draw (g2) to (v2);
\draw (v1) to (g3);\draw (v2) to (g3);\draw (v3) to (g3);\draw (v4) to (g3);\draw (v5) to (g3);\draw (g3) to (v1);\draw (g3) to (v2);\draw (g3) to (v3);\draw (g3) to (v4);\draw (g3) to (v5);
\draw (v3) to (g5); \draw (g5) to (v3);
\draw (v4) to (g4);\draw (v6) to (g4);\draw (g4) to (v4);\draw (g4) to (v6);
\draw (v5) to (g5);\draw (v7) to (g5);\draw (g5) to (v5);\draw (g5) to (v7);
\draw (v6) to (g6);\draw (v7) to (g6);\draw (v8) to (g6);\draw (g6) to (v6);\draw (g6) to (v7);\draw (g6) to (v8);

\end{tikzpicture}
\caption{\label{fig:var_groups} Example of bipartite graph between variables and groups induced by the group structure $\GG^1$, see text for details.}
\end{figure}
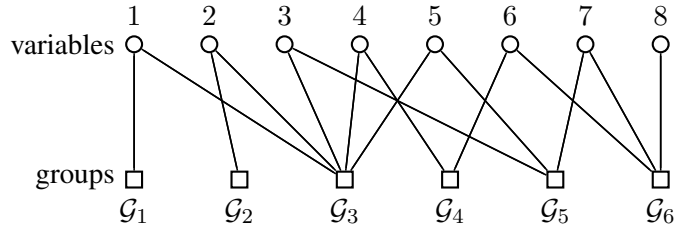

Another useful representation of a group structure is via an {\em intersection graph} $(\mathcal{V}, \mathcal{E})$ where the nodes $\mathcal{V}$ are the groups $\G \in \mathfrak{G}$ and the edge set $\mathcal{E}$ contains $e_{ij}$ if $\G_i \cap \G_j \neq \emptyset$, that is an edge connects two groups that {\em overlap}. 
A sequence of connected nodes $v_1, v_2, \ldots, v_n$, is a {\em cycle} if $v_1 = v_n$.

In order to illustrate these concepts, consider the group structure $\GG^1$ defined by the following groups, $\G_1 = \{1\}$, $\G_2 = \{2\}$, $\G_3 = \{1, 2, 3, 4, 5\}$, $\G_4 = \{4,6\}$, $\G_5 = \{3, 5, 7\}$ and $\G_6 = \{6, 7, 8\}$. 
$\GG^1$ can be represented by the variables-groups bipartite graph of Fig.~\ref{fig:var_groups} or by the intersection graph of Fig.~\ref{fig:bipartite}, which is bipartite and contains cycles.

\tikzstyle{place_black}=[circle,draw=black,fill=black,thick, minimum size=6pt, inner sep=0pt]
\tikzstyle{place_red}=[circle,draw=black,fill=white,thick, minimum size=6pt, inner sep=0pt]
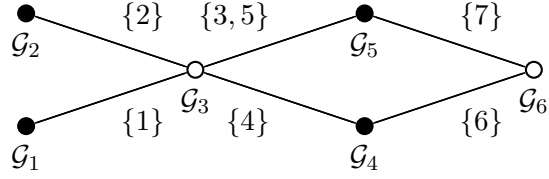
\begin{figure}
\centering
\begin{tikzpicture}[-,>=stealth',shorten >=1pt,auto,node distance=2.8cm, semithick]
  \tikzstyle{every state}=[fill=blue!20,draw=none,text=black]
	\node[place_black] (n1) at (0,0) [label=below:$\G_1$] {};
	\node[place_black] (n2) at (0,1.5) [label=below:$\G_2$] {};
	\node[place_red] (n3) at (2.25,0.75) [label=below:$\G_3$] {};
	\node[place_black] (n4) at (4.5,0) [label=below:$\G_4$] {};
	\node[place_black] (n5) at (4.5,1.5) [label=below:$\G_5$] {};
	\node[place_red] (n6) at (6.75,0.75) [label=below:$\G_6$] {};
  
	\path (n1) edge node {} (n3);
	\path (n2) edge node {$\{2\}$} (n3);
	\path (n3) edge node {} (n4);
	\path (n3) edge node {$\{3, 5\}$} (n5);
	\path (n4) edge node {} (n6);
	\path (n5) edge node {$\{7\}$} (n6);
	
	\path (n3) edge node {$\{1\}$} (n1);
	\path (n3) edge node {} (n2);
	\path (n4) edge node {$\{4\}$} (n3);
	\path (n5) edge node {} (n3);
	\path (n6) edge node {$\{6\}$} (n4);
	\path (n6) edge node {} (n5);
\end{tikzpicture}
\caption{\label{fig:bipartite} Bipartite intersection graph with cycles induced by the group structure $\GG^1$, where on each edge we report the elements of the intersection.}
\end{figure}

An important class of group structures is given by groups whose intersection graph is acyclic (i.e., a tree or a forest) and we call them {\em acyclic group structures}. 
A necessary, but not sufficient, condition for a group structure to have an acyclic intersection graph is that each element of the ground set occurs in at most two groups, i.e., the groups are at most {\em pairwise overlapping}.
Note that a tree or a forest is a bipartite graph, where the two partitions contains the nodes that belong to alternate levels of the tree/forest.
For example, consider $\G_1 = \{1, 2, 3\}$, $\G_2 = \{3, 4, 5\}$, $\G_3 = \{5, 6, 7\}$, which can be represented by the intersection graph in Fig.~\ref{fig:loopless}(Left). 
If $\G_3$ were to include an element from $\G_1$, for example $\{2\}$, we would have the cyclic graph of Fig.~\ref{fig:loopless}(Right).
Note that $\GG^1$ is pairwise overlapping, but not acyclic, since $\G_3, \G_4, \G_5$ and $\G_6$ form a cycle.

\tikzstyle{place}=[circle,draw=black,fill=white,thick, minimum size=6pt, inner sep=0pt]
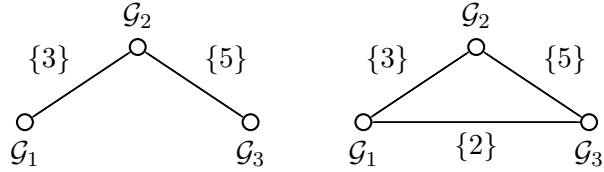
\begin{figure}
\centering
\begin{tikzpicture}[-,>=stealth',shorten >=1pt,auto,node distance=1.5cm, semithick]
	\node[place] (n1) at (0,0)    [label=below:$\G_1$] {};
	\node[place] (n2) at (1.5,1) [label=above:$\G_2$] {};
	\node[place] (n3) at (3,0)    [label=below:$\G_3$] {};
	
\draw (n1) to node {$\{3\}$} (n2);
\draw (n2) to node {} (n1);
\draw (n2) to node {$\{5\}$} (n3);
\draw (n3) to node {} (n2);

	\node[place] (n4) at (4.5,0)    [label=below:$\G_1$] {};
	\node[place] (n5) at (6,1)       [label=above:$\G_2$] {};
	\node[place] (n6) at (7.5,0)    [label=below:$\G_3$] {};
\draw (n4) to node {$\{3\}$} (n5);
\draw (n5) to node {$\{5\}$} (n6);
\draw (n6) to node [label=below:$\{2\}$] {} (n4);

\draw (n5) to node {} (n4);
\draw (n6) to node {} (n5);
\draw (n4) to node {} (n6);

\end{tikzpicture}
\caption{\label{fig:loopless}(Left) Acyclic groups. (Right) By adding one element from $\G_1$ into $\G_3$, we introduce a cycle in the graph.}
\end{figure}

We anchor our analysis of the tractability of interpretability via selection of groups on covering arguments. 
Most of the definition we introduce here can be reformulated as variants of set covers on the support of a signal $\x$, however we believe it is more natural in this context to talk about group covers of a signal $\x$ directly.

\begin{definition}
A {\bf group cover} $\mathcal{S}({\bf x})$ for a signal ${\bf x} \in \Real^N$ is a collection of groups such that $\supp({\bf x}) \subseteq \bigcup_{\G \in \mathcal{S}({\bf x})} \G$. An alternative equivalent definition is given by
$$
\mathcal{S}({\bf x}) = \{\G_j \in \mathfrak{G} : \boldsymbol{\omega} \in \mathbb{B}^\ngroups,~ \omega_j = 1,~ \A^\mathfrak{G}\boldsymbol{\omega} \geq \iota({\bf x})\} \; .
$$
\end{definition}

The binary vector $\boldsymbol{\omega}$ indicates which groups are active and the constraint $\A^\mathfrak{G}\boldsymbol\omega \geq \iota({\bf x})$ makes sure that, for every non-zero component of ${\bf x}$, there is at least one active group that covers it.
We also say that $\mathcal{S}({\bf x})$ {\em covers} ${\bf x}$.
Note that the group cover is often not unique and $\mathcal{S}({\bf x}) = \mathfrak{G}$ is a group cover for any signal ${\bf x}$. This observation leads us to consider more restrictive definitions of group covers.

\begin{definition}
A {\bf $G$-group cover} $\mathcal{S}^G({\bf x}) \subseteq \mathfrak{G}$ is a group cover for ${\bf x}$ with at most $G$ elements,
$$
\label{eq:G_cover}
\mathcal{S}^G({\bf x}) = \{\G_j \in \mathfrak{G} : \boldsymbol{\omega} \in \mathbb{B}^\ngroups,~\omega_j = 1,~\A^\mathfrak{G}\boldsymbol{\omega} \geq \iota({\bf x}),~\sum_{j=1}^\ngroups \omega_j  \leq G\} \; .
$$
\end{definition}

It is not guaranteed that a $G$-group cover always exists for any value of $G$. 
Finding the smallest $G$-group cover lead to the following definitions.

\begin{definition}
The {\bf group $\ell_0$-``norm"} is defined as
\begin{equation}
\label{eq:group_0_norm}
\|{\bf x}\|_{\GG,0} := \min\limits_{\boldsymbol{\omega} \in \mathbb{B}^\ngroups} \left \{ \sum_{j=1}^\ngroups \omega_j : \A^\mathfrak{G}\boldsymbol{\omega} \geq \iota({\bf x}) \right \} \; .
\end{equation}
\end{definition}

\begin{definition}
A {\bf minimal group cover} for a signal ${\bf x} \in \Real^N$ is defined as $\mathcal{M}({\bf x}) = \{\G_j \in \GG : \hat{\omega}({\bf x})_j =1\}$, where 
$\hat{\boldsymbol{\omega}}$ is a minimizer for \eqref{eq:group_0_norm},
$$
\hat{\boldsymbol{\omega}}({\bf x}) \in \argmin\limits_{\boldsymbol{\omega} \in \mathbb{B}^\ngroups} \left \{ \sum_{j=1}^\ngroups \omega_j : \A^\mathfrak{G}\boldsymbol{\omega} \geq \iota({\bf x}) \right \} \; .
$$
\end{definition}

A {\em minimal group cover} $\mathcal{M}({\bf x})$ is a group cover for the support of ${\bf x}$ with minimal cardinality.
Note that there exist pathological cases where for the same group $\ell_0$-``norm", we have different minimal group cover models. 
The minimal group cover can also be seen as the minimum set cover of the support of ${\bf x}$.

\begin{definition}
A signal ${\bf x}$ is {\bf $G$-group sparse} with respect to a group structure $\mathfrak{G}$ if $\|{\bf x}\|_{\GG,0} \leq G$.
\end{definition}

In other words, a signal is {\em $G$-group sparse} if its support is contained in the union of at most $G$ groups from $\GG$.

\section{\label{sec:tract}Tractability of interpretations}
Although real signals may not be exactly group-sparse, it is possible to give a group-based interpretation by finding a group-sparse approximation and identifying the groups that constitute its support.
In this section, we establish the hardness of finding group-based interpretations of signals in general and characterize a class of group structures that lead to tractable interpretations. 
In particular, we present a polynomial time algorithm that finds the correct $G$-group-support of the $G$-group-sparse approximation of ${\bf x}$, given a positive integer $G$ and the group structure $\mathfrak{G}$.

We first define the $G$-group sparse approximation $\hat{{\bf x}}$ and then show that it can be easily obtained from its $G$-group cover $\mathcal{S} ^G(\hat{\bf x})$, which is the solution of the model selection problem. 
We then reformulate the model selection problem as the weighted maximum coverage problem. 
Finally, we present our main result, the polynomial time dynamic program for acyclic group structures. 

\begin{prob}[Signal approximation]
Given a signal ${\bf x} \in \Real^N$, a best $G$-group sparse approximation $\hat{\bf x}$ is given by
\begin{equation}
\label{eq:approx}
\hat{\bf x} \in \argmin\limits_{{\bf z} \in \Real^N} \left \{ \|{\bf x} - {\bf z}\|_2^2 : \|{\bf z}\|_{\GG,0} \leq G \right \}.
\end{equation}
\end{prob}

If we already know the $G$-group cover of the approximation $\mathcal{S}^G(\hat{\bf x})$, we can obtain $\hat{\bf x}$ as $\hat{\bf x}_\mathcal{I} = {\bf x}_\mathcal{I}$ and $\hat{\bf x}_{\mathcal{I}^c} = 0$, where $\mathcal{I} = \bigcup_{\G \in \mathcal{S}^G(\hat{\bf x})} \G$ and $\mathcal{I}^c = \N \setminus \mathcal{I}$.
Therefore, we can solve Problem 1 by solving the following discrete problem.
\begin{prob}[Model selection]
Given a signal ${\bf x} \in \Real^N$, a $G$-group cover model for its $G$-group sparse approximation is expressed as follows
\begin{equation}
\label{eq:model_sel}
\mathcal{S}^G(\hat{\bf x}) \in \argmax\limits_{\scriptsize \mathcal{S} \subseteq \GG} \left \{ \sum\limits_{i \in \mathcal{I}} x_i^2 : \mathcal{I} = \bigcup_{\G \in \mathcal{S}} \G,~|\mathcal{S}| \leq G \right \}.
\end{equation}
\end{prob}
To show the connection between the two problems, we first reformulate Problem 1 as
$$
\min\limits_{{\bf z} \in \Real^N} \left \{ \|{\bf x} - {\bf z}\|_2^2 : \supp({\bf z}) = \mathcal{I}, \mathcal{I} = \bigcup_{\G \in \S} \G, \S \subseteq \GG, |\S| \leq G \right \},
$$
which can be rewritten as
$$
\min\limits_{\scriptsize \begin{array}{c} \S \subseteq \GG\\ |\S| \leq G\\ \mathcal{I} = \bigcup_{\G \in \S} \G\end{array}} \min\limits_{\scriptsize \begin{array}{c} {\bf z} \in \Real^N\\ \supp({\bf z}) = \mathcal{I}\end{array}} \|{\bf x} - {\bf z}\|_2^2 \; .
$$
The optimal solution is not changed if we introduce a constant, change sign of the objective and consider maximization instead of minimization
$$
\max\limits_{\scriptsize \begin{array}{c} \S \subseteq \GG\\ |\S| \leq G\\ \mathcal{I} = \bigcup_{\G \in \S} \G\end{array}} \max\limits_{\scriptsize \begin{array}{c} {\bf z} \in \Real^N\\ \supp({\bf z}) = \mathcal{I}\end{array}} \bigg \{ \|{\bf x}\|_2^2 - \|{\bf x} - {\bf z}\|_2^2 \bigg \} \; .
$$
The internal maximization is achieved for $\hat{\bf x}$ as $\hat{\bf x}_\mathcal{I} = {\bf x}_\mathcal{I}$ and $\hat{\bf x}_{\mathcal{I}^c} = 0$, so that we have, as desired, 
$$
\S^G(\hat{{\bf x}}) \in \argmax\limits_{\scriptsize \begin{array}{c} \S \subseteq \GG\\ |\S| \leq G\\\mathcal{I} = \bigcup_{\G \in \S} \G\end{array}} \|{\bf x}_\mathcal{I}\|_2^2 \; .
$$

The following reformulation of Problem 2 as a binary problem allows us to characterize its tractability.
\begin{lemma}
Given ${\bf x} \in \Real^N$ and a group structure $\GG$, we have that $\S^G(\hat{\bf x}) = \{ \G_j \in \GG : \omega^G_j = 1 \}$, where $(\boldsymbol{\omega}^G, {\bf y}^G)$ is an optimal solution of 
\begin{equation}
\label{eq:WMC}
\max\limits_{\boldsymbol{\omega} \in \mathbb{B}^\ngroups,~{\bf y} \in \mathbb{B}^\dim} \left \{ \sum_{i=1}^N y_i x_i^2 : \A^\mathfrak{G} \boldsymbol{\omega} \geq {\bf y},  \sum_{j=1}^\ngroups \omega_j \leq G \right \}.
\end{equation}
\end{lemma}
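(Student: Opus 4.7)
The plan is to establish the equivalence by identifying each subset $\S \subseteq \GG$ with its indicator vector $\boldsymbol{\omega} \in \BB^\ngroups$ (setting $\omega_j = 1$ iff $\G_j \in \S$) and then observing that the auxiliary variable $\bf y$ in \eqref{eq:WMC} plays the role of the indicator of the covered index set $\mathcal{I} = \bigcup_{\G \in \S} \G$. Under this correspondence the constraint $|\S| \leq G$ becomes exactly $\sum_j \omega_j \leq G$, so the feasible region of \eqref{eq:WMC} projected onto the $\boldsymbol{\omega}$-coordinates is in bijection with the feasible region of \eqref{eq:model_sel}.

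The heart of the argument is to perform the inner maximization of \eqref{eq:WMC} over $\bf y$ for a fixed $\boldsymbol{\omega}$. The coupling constraint $\A^\GG \boldsymbol{\omega} \geq \bf y$ is componentwise, meaning $y_i \leq (\A^\GG \boldsymbol{\omega})_i = \sum_j A^\GG_{ij} \omega_j$, which counts the number of active groups containing index $i$. Since $y_i \in \{0,1\}$, this reduces to: $y_i$ may be set to $1$ precisely when $i$ belongs to some group with $\omega_j = 1$, i.e.\ when $i \in \mathcal{I}(\boldsymbol{\omega}) := \bigcup_{j:\,\omega_j = 1} \G_j$. Because the coefficients $x_i^2$ in the objective are nonnegative, the unique best choice is $y_i = 1$ for all $i \in \mathcal{I}(\boldsymbol{\omega})$ and $y_i = 0$ otherwise, yielding inner value $\sum_{i \in \mathcal{I}(\boldsymbol{\omega})} x_i^2$.

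Substituting this inner optimum back into \eqref{eq:WMC} reduces it to
\[
\max_{\boldsymbol{\omega} \in \BB^\ngroups} \left\{ \sum_{i \in \mathcal{I}(\boldsymbol{\omega})} x_i^2 \;:\; \sum_{j=1}^{\ngroups} \omega_j \leq G \right\},
\]
which is precisely \eqref{eq:model_sel} rewritten through the bijection $\S \leftrightarrow \boldsymbol{\omega}$. Hence any optimal $\boldsymbol{\omega}^G$ of \eqref{eq:WMC} yields an optimal $G$-group cover $\S^G(\hat{\bf x}) = \{\G_j : \omega_j^G = 1\}$ for Problem~2, and conversely, giving the claimed identity.

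I do not anticipate a serious obstacle: the argument is an unpacking of definitions. The only point requiring care is handling possible ties in the inner maximization when $x_i = 0$ (there the value of $y_i$ is irrelevant to the objective), but this does not affect the value achieved or the resulting group cover, so the statement of the lemma—which asserts the correspondence at the level of group selection via $\boldsymbol{\omega}^G$—holds for every optimizer.
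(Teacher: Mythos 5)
Your proof is correct and follows essentially the same approach as the paper, which merely sketches the argument (interpreting $\boldsymbol{\omega}$ and ${\bf y}$ as group and variable indicators, with the constraint $\A^\mathfrak{G}\boldsymbol{\omega} \geq {\bf y}$ enforcing coverage) and defers details to a citation. Your version is in fact more complete: the explicit inner maximization over ${\bf y}$ for fixed $\boldsymbol{\omega}$, using nonnegativity of the weights $x_i^2$, and the remark about ties when $x_i = 0$ fill in exactly the steps the paper leaves implicit.
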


\begin{proof}
The proof follows along the same lines as the proof in \cite{kyrillidis2012combinatorial}. Note that in \eqref{eq:WMC}, $\boldsymbol{\omega}$ and $\bf y$ are binary variables that specify which groups and which variables are selected, respectively. The constraint $\A^\mathfrak{G} \boldsymbol{\omega} \geq {\bf y}$ makes sure that for every selected variable at least one group is selected to cover it, while the constraint $\sum_{j=1}^\ngroups \omega_j \leq G$ restricts choosing at most $G$ groups.
\end{proof}

Problem \eqref{eq:WMC} can produce all the instances of the weighted maximum coverage problem (WMC), where the weights for each element are given by $x_i^2$ ($1 \leq i \leq \dim$) and the index sets are given by the groups $\G_j \in \GG$ ($1 \leq j \leq \ngroups$).
Since WMC is in general NP-hard \cite{hochbaum1997approximation} and given Lemma 1, the tractability of \eqref{eq:model_sel} directly depends on the hardness of \eqref{eq:WMC}, which leads to the following result.
\begin{prop}
The model selection problem \eqref{eq:model_sel} is in general NP-hard.
\end{prop}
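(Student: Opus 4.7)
The plan is to give a polynomial-time reduction from the weighted maximum coverage problem (WMC), which is known to be NP-hard, to the model selection problem \eqref{eq:model_sel}, using Lemma~1 as the bridge. Recall that an instance of WMC consists of a ground set $\{1,\ldots,N\}$ with non-negative weights $w_1,\ldots,w_N$, a collection of subsets $\G_1,\ldots,\G_M \subseteq \{1,\ldots,N\}$, and a positive integer budget $G$; the task is to select at most $G$ of the subsets so as to maximize the total weight of elements in their union.

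First, I would construct, in polynomial time, a corresponding instance of \eqref{eq:model_sel}. Given a WMC instance, define the signal ${\bf x} \in \Real^N$ by $x_i = \sqrt{w_i}$ (using that WMC weights can be taken to be non-negative, so this is well defined), take the group structure $\GG$ to consist exactly of the subsets $\G_1,\ldots,\G_M$, and keep the same budget $G$. Then for any $\S \subseteq \GG$ with $|\S| \leq G$ and associated index set $\mathcal{I} = \bigcup_{\G \in \S} \G$, the objective of \eqref{eq:model_sel} equals $\sum_{i \in \mathcal{I}} x_i^2 = \sum_{i \in \mathcal{I}} w_i$, which is precisely the WMC objective.

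Next, I would invoke Lemma~1 to make this correspondence formal at the level of optimizers: by the lemma, an optimal solution $\S^G(\hat{\bf x})$ of \eqref{eq:model_sel} is obtained from an optimal $(\boldsymbol{\omega}^G,{\bf y}^G)$ of the binary program \eqref{eq:WMC}, and the latter is exactly a WMC instance with weights $x_i^2 = w_i$ and sets $\G_j$. Hence an optimal solution of the constructed model selection instance yields, in polynomial time, an optimal solution of the original WMC instance. Since WMC is NP-hard and the reduction is polynomial, it follows that \eqref{eq:model_sel} is NP-hard in general.

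I do not expect any serious obstacle here; the only small care point is to ensure that the reduction preserves objective values and optima exactly. This is transparent once weights are written as squares via $x_i = \sqrt{w_i}$, and Lemma~1 already encapsulates the equivalence between \eqref{eq:model_sel} and the WMC-shaped binary program \eqref{eq:WMC}, so nothing beyond the reduction sketch above is needed.
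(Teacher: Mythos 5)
Your reduction is exactly the paper's argument: the paper proves this proposition by observing (right before the statement) that, via Lemma~1, problem \eqref{eq:WMC} can produce every WMC instance with element weights $x_i^2$, so NP-hardness of WMC transfers to \eqref{eq:model_sel}. Your only addition is making the encoding $x_i = \sqrt{w_i}$ explicit, which is a harmless elaboration of the same idea.
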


It is possible to approximate the solution of \eqref{eq:WMC} using the greedy WMC algorithm \cite{nemhauser1978analysis}.
At each iteration, the algorithm selects the group that covers new variables with maximum combined weight until $G$ groups have been selected. 
However, we show next that for certain group structures we can find an exact solution.

Our main result is an algorithm for solving \eqref{eq:WMC} for acyclic group structures. 
The proof is given in Appendix \ref{sec:dp_lp}.

\begin{theorem}
\label{theo:DP1}
Given an acyclic group structure $\mathfrak{G}$, there exists a polynomial time dynamic programming algorithm that solves \eqref{eq:WMC}. 
\end{theorem}

\begin{rem}
Sets that are included in one another can be excluded because choosing the larger set would be a strictly dominant strategy, making the smaller set redundant. 
However, the correctness of the dynamic program is unaffected even if such sets are present, as long as the intersection graph remains acyclic.
\end{rem}

\begin{rem}
It is also possible to consider the case where each group $\G_i$ has a cost $C_i$ and we are given a maximum group cost budget $C$. 
The problem then becomes the Budgeted Maximum Coverage \cite{khuller1999budgeted}. 
However, this problem is NP-hard, even in the non-overlapping case, because it generalizes the knapsack problem. 
However, similarly to the pseudo-polynomial time algorithm for knapsack \cite{kellerer2004knapsack}, we can easily devise a pseudo-polynomial time algorithm for the weighted group sparse problem, even for acyclic overlaps. The only condition is that the costs must be integers. 
The time complexity of the resulting algorithm is then polynomial in $C$, the maximum group cost budget.
The algorithm is almost the same as the one given in Appendix \ref{sec:dp_lp}: instead of keeping track of selecting $g$ groups, where $g$ varies from $1$ to $G$; we keep track of selecting groups with total weight equal to $c$, where $c$ varies from $1$ to $C$.
\end{rem}

\section{Discrete relaxations}
\label{sec:discrete_relax}

Relaxations are useful techniques that allow to obtain approximate, or even sometimes exact, solutions while being computationally less demanding.
In our case, we relax the constraint on the number of groups in \eqref{eq:WMC} into a regularization term with parameter $\lambda > 0$, which amounts to paying a penalty of $\lambda$ for each selected group. 
We then obtain the following binary linear program
\begin{equation}
(\boldsymbol{\omega}^\lambda, {\bf y}^\lambda) \in  \argmax\limits_{\boldsymbol{\omega} \in \BB^\ngroups,~{\bf y} \in \BB^\dim} \left \{ \sum_{i=1}^N y_i x_i^2 - \lambda\sum_{j=1}^\ngroups \omega_j: \A^\mathfrak{G} \boldsymbol{\omega} \geq {\bf y} \right \} \;
\label{eq:PR}
\end{equation}
We can rewrite the previous program in standard form. Let $\mathbf{u}^\top = [{\bf y}^\top~\boldsymbol{\omega}^\top] \in \BB^{\dim+\ngroups}$, $\mathbf{w}^\top = [x_1^2,  \ldots, x_N^2,-\lambda\1_M^\top] \in \mathbb{R}^{\dim + \ngroups}$ and $\mathbf{C} = [\Id_N,~-\A^\GG] \in \BB^{\dim \times (\dim + \ngroups)}$. We then have that \eqref{eq:PR} is equivalent to
\begin{equation}
\mathbf{u}^\lambda \in  \argmax\limits_{\mathbf{u} \in \BB^{\dim+\ngroups}} \left \{ \mathbf{w}^\top \mathbf{u} : \mathbf{Cu} \leq 0 \right \}
\label{eq:PR_std}
\end{equation}

In general, \eqref{eq:PR_std} is NP-hard, however, it is well known \cite{wolsey1999integer} that if the constraint matrix $\mathbf{C}$ is Totally Unimodular (TU), then it can be solved in polynomial-time. 
While the concatenation of two TU matrices is not TU in general, the concatenation of the identity matrix with a TU matrix results in a TU matrix. Thus, due to its structure, $\mathbf{C}$ is TU if and only if $\A^\mathfrak{G}$ is TU \cite[Proposition 2.1]{wolsey1999integer}.

The next lemma characterizes which group structures lead to totally unimodular constraints.

\begin{prop}
Group structures whose intersection graph is bipartite lead to constraint matrices $\A^\mathfrak{G}$ that are TU. 
\end{prop}
\begin{proof}
We first use a result that establishes that if a matrix is TU, then its transpose is also TU \cite[Proposition 2.1]{wolsey1999integer}. 
We then apply \cite[Corollary 2.8]{wolsey1999integer} to $\A^\mathfrak{G}$, swapping the roles of rows and columns. 
Given a $\{0, 1, -1\}$ matrix whose columns can be partitioned into two sets, $\mathcal{S}_1$ and $\mathcal{S}_2$, and with no more than two nonzero elements in each row, this corollary provides two sufficient conditions for it being totally unimodular:
\begin{enumerate}
	\item If two nonzero entries in a row have the same sign, then the column of one is in $\mathcal{S}_1$ and the other is in $\mathcal{S}_2$.
	\item If two nonzero entries in a row have opposite signs, then their columns are both in $\mathcal{S}_1$ or both in $\mathcal{S}_2$.
\end{enumerate}
In our case, the columns of $\A^\mathfrak{G}$, which represent groups, can be partitioned in two sets, $\mathcal{S}_1$ and $\mathcal{S}_2$ because the intersection graph is bipartite. The two sets represents groups which have no common overlap so that each row of $\A^\mathfrak{G}$ contains at most two nonzero entries, one in each set. Furthermore, the entries in $\A^\mathfrak{G}$ are only $0$ or $1$, so that condition 1) is satisfied and condition 2) does not apply.
\end{proof}

\begin{cor}
Acyclic group structures lead to totally unimodular constraints.
\end{cor}
\begin{proof}
Acyclic group structures have an intersection graph which is a tree or a forest, which is bipartite.
\end{proof}

The worst case complexity for solving the linear program \eqref{eq:PR_std}, via a primal-dual method \cite{wright1997primal}, is $\bigO(\dim^2(\dim+\ngroups)^{1.5})$, which is greater than the complexity of the dynamic program of Theorem \ref{theo:DP1}.
However, in practice, using an off-the-shelf LP solver may still be faster, because the empirical performance is usually much better than the worst case complexity.

Another way of solving the linear program for acyclic group structures is to reformulate it as an energy maximization problem over a tree, or forest.
In particular, let $\psi_i = \|\x_{\G_i}\|_2^2$ be the energy captured by group $\G_i$ and $\psi_{ij} = \|\x_{\G_i \cap \G_j}\|_2^2$ the energy that is double counted if both $\G_i$ and $\G_j$ are selected, which then needs to be subtracted from the total energy.
Problem \eqref{eq:PR} can then be formulated as
$$
\max_{\boldsymbol{\omega} \in \BB^\ngroups} \sum_{i=1}^\ngroups (\psi_i - \lambda) \omega_i - \sum_{(i,j) \in \mathcal{E}} \psi_{ij} \omega_i \omega_j \; .
$$
This problem is equivalent to finding the most probable state of the binary variables $\omega_i$, where their probabilities can be factored into node and edge potentials. 
These potentials can be computed in $\bigO(N)$ time via a single sweep over the elements, then the most probable state can be exactly estimated by the max-sum algorithm in only $\bigO(M)$ operations, by sending messages from the leaves to the root and then propagating other message from  the root back to the leaves \cite{bishop2006pattern}.

%
The next lemma establishes when the regularized solution coincides with the solution of \eqref{eq:WMC}.
\begin{lemma}
If the value of the regularization parameter $\lambda$ is such that the solution $(\boldsymbol{\omega}^\lambda, {\bf y}^\lambda)$ of \eqref{eq:PR} satisfies $\sum_j \omega_j^\lambda = G$, then $(\boldsymbol{\omega}^\lambda, {\bf y}^\lambda)$ is also a solution for \eqref{eq:WMC}. 
\end{lemma}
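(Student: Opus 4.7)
The plan is a short direct argument by contradiction that uses only the fact that the penalty term in \eqref{eq:PR} is constant whenever we restrict to pairs $(\boldsymbol{\omega},{\bf y})$ with exactly $G$ selected groups.

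First I would observe that $(\boldsymbol{\omega}^\lambda,{\bf y}^\lambda)$ is automatically feasible for \eqref{eq:WMC}: by assumption it satisfies $\A^{\mathfrak G}\boldsymbol{\omega}^\lambda\geq {\bf y}^\lambda$, and the hypothesis $\sum_j \omega_j^\lambda = G$ implies the cardinality constraint $\sum_j \omega_j^\lambda\leq G$. So it suffices to rule out the existence of a strictly better feasible point for \eqref{eq:WMC}.

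Suppose, toward a contradiction, that there is $(\boldsymbol{\omega}^\star,{\bf y}^\star)\in\mathbb{B}^M\times\mathbb{B}^N$ feasible for \eqref{eq:WMC} with
$$\sum_{i=1}^N y_i^\star x_i^2 \;>\; \sum_{i=1}^N y_i^\lambda x_i^2.$$
Since $(\boldsymbol{\omega}^\star,{\bf y}^\star)$ is feasible for \eqref{eq:WMC} we have $\sum_j \omega_j^\star\leq G=\sum_j \omega_j^\lambda$, hence $-\lambda\sum_j \omega_j^\star \geq -\lambda\sum_j \omega_j^\lambda$ because $\lambda>0$. Adding this inequality to the previous strict one gives
$$\sum_{i=1}^N y_i^\star x_i^2 - \lambda\sum_{j=1}^M \omega_j^\star \;>\; \sum_{i=1}^N y_i^\lambda x_i^2 - \lambda\sum_{j=1}^M \omega_j^\lambda.$$
Since $(\boldsymbol{\omega}^\star,{\bf y}^\star)$ also satisfies $\A^{\mathfrak G}\boldsymbol{\omega}^\star\geq {\bf y}^\star$, it is feasible for \eqref{eq:PR}, so this strict inequality contradicts the optimality of $(\boldsymbol{\omega}^\lambda,{\bf y}^\lambda)$ for \eqref{eq:PR}. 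Therefore no such $(\boldsymbol{\omega}^\star,{\bf y}^\star)$ exists, and $(\boldsymbol{\omega}^\lambda,{\bf y}^\lambda)$ is optimal for \eqref{eq:WMC}.

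There is essentially no obstacle here: the only subtlety is making sure $\lambda>0$ so that the comparison of penalty terms goes the right way, which is already part of the problem setup just before \eqref{eq:PR}. The argument is just the standard Lagrangian observation that if a penalized optimizer happens to satisfy the hard constraint with equality, it is automatically optimal for the constrained version.
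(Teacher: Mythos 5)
Your argument is correct and self-contained: feasibility of $(\boldsymbol{\omega}^\lambda,{\bf y}^\lambda)$ for \eqref{eq:WMC} is immediate from the hypothesis, and the contradiction step is valid because any competitor feasible for \eqref{eq:WMC} has $\sum_j\omega_j^\star\leq G=\sum_j\omega_j^\lambda$, so with $\lambda>0$ its penalty term is no worse, and a strictly larger unpenalized objective would force a strictly larger penalized objective, contradicting optimality for \eqref{eq:PR}. The paper takes a different route: it proves the lemma as a corollary of Prop.~\ref{prop:pareto_optimal}, which recasts \eqref{eq:WMC} over all budgets as a bi-objective (Pareto) problem and identifies \eqref{eq:PR} as its scalarization, so that any scalarization optimizer is Pareto optimal and hence, when it happens to use exactly $G$ groups, optimal for the budget-$G$ problem. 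Your direct argument buys elementarity and independence from the convex-hull machinery; it is essentially the one-directional half of the scalarization correspondence, specialized to the case where the penalized solution meets the budget with equality. The paper's route buys a unified picture: the same Prop.~\ref{prop:pareto_optimal} also delivers the converse limitation (that \eqref{eq:PR} can only reach points on the convex hull of the Pareto frontier), which is the phenomenon the authors go on to illustrate numerically in Section~\ref{sec:exp}. Either proof is acceptable; yours is arguably the cleaner way to establish this particular lemma in isolation.
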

\begin{proof}
This lemma is a direct consequence of Prop. \ref{prop:pareto_optimal} below.
\end{proof}

However, as we numerically  show in Section \ref{sec:exp}, given a value of $G$ it is not always possible to find a value of $\lambda$ such that the solution of \eqref{eq:PR} is also a solution for \eqref{eq:WMC}. 
Let the set of points $\mathcal{P} = \{G, (f(G))\}_{G=1}^\ngroups$, where $f(G) = \sum_{i=1}^N y^G_i x_i^2$, be the Pareto frontier of \eqref{eq:WMC}.
We then have the following characterization of the solutions of the discrete relaxation.

\begin{prop}
\label{prop:pareto_optimal}
The discrete relaxation \eqref{eq:PR} yields only the solutions that lie on the intersection between the Pareto frontier of \eqref{eq:WMC}, $\mathcal{P}, $ and the boundary of the convex hull of $\mathcal{P}$.
\end{prop}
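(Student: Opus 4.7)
The plan is a standard Lagrangian scalarization argument. Fix $\lambda \geq 0$, let $(\boldsymbol{\omega}^\lambda,\mathbf{y}^\lambda)$ be any optimizer of \eqref{eq:PR}, and set $G^\lambda := \sum_j \omega_j^\lambda$, $v^\lambda := \sum_i y_i^\lambda x_i^2$. I first argue that $(G^\lambda, v^\lambda) \in \mathcal{P}$, and then that this point lies on the boundary of the convex hull of $\mathcal{P}$.

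For the first claim, a direct exchange argument suffices. Suppose some $(\boldsymbol{\omega}', \mathbf{y}')$ feasible for \eqref{eq:WMC} at budget $G^\lambda$ achieved $\sum_i y'_i x_i^2 > v^\lambda$. Since $\sum_j \omega'_j \leq G^\lambda$ and $\lambda \geq 0$,
\[
\sum_i y'_i x_i^2 - \lambda \sum_j \omega'_j \;\geq\; \sum_i y'_i x_i^2 - \lambda G^\lambda \;>\; v^\lambda - \lambda G^\lambda,
\]
contradicting optimality of $(\boldsymbol{\omega}^\lambda, \mathbf{y}^\lambda)$ in \eqref{eq:PR}. Hence $v^\lambda = f(G^\lambda)$. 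Lemma~2 is an immediate byproduct: when $G^\lambda = G$, the pair $(\boldsymbol{\omega}^\lambda, \mathbf{y}^\lambda)$ attains $f(G)$ while respecting $\sum_j \omega_j \leq G$, so it also solves \eqref{eq:WMC}.

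For the second claim, for each $G' \in \{1,\dots,M\}$ pick $(\boldsymbol{\omega}^{G'}, \mathbf{y}^{G'})$ attaining $f(G')$ in \eqref{eq:WMC}. Each such pair is feasible for \eqref{eq:PR}, and optimality then gives
\[
f(G^\lambda) - \lambda G^\lambda \;\geq\; f(G') - \lambda \sum_j \omega_j^{G'} \;\geq\; f(G') - \lambda G',
\]
i.e.\ $f(G') \leq f(G^\lambda) + \lambda(G' - G^\lambda)$ for every $G'$. Thus the affine function $\ell(G) := f(G^\lambda) + \lambda(G - G^\lambda)$ upper bounds $\mathcal{P}$ and is tight at $(G^\lambda, f(G^\lambda))$; the half-plane $\{(G, v) : v \leq \ell(G)\}$ then contains the convex hull of $\mathcal{P}$, and its boundary line meets this hull at $(G^\lambda, f(G^\lambda))$, which therefore lies on the boundary of the convex hull of $\mathcal{P}$.

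The main content here is conceptual rather than technical: \eqref{eq:PR} is the linear scalarization of the bi-criterion problem ``maximize $\sum_i y_i x_i^2$, minimize $\sum_j \omega_j$'', and it is a well-known limitation of such scalarizations that they recover only the Pareto points coinciding with the boundary of the convex hull of $\mathcal{P}$. The concave ``pockets'' of $\mathcal{P}$ lying strictly below its upper envelope are therefore unreachable by \eqref{eq:PR} for any $\lambda$, which is precisely the discrepancy highlighted numerically in Section~\ref{sec:exp}.
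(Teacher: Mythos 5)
Your proof is correct and follows essentially the same route as the paper: the paper recasts \eqref{eq:PR} as the scalarization of the bi-criterion problem \eqref{eq:vv_wmc} and invokes the standard result (citing Boyd and Vandenberghe) that scalarization returns only Pareto points admitting a supporting hyperplane, which is exactly the pair of facts you establish. The only difference is that you prove those two facts explicitly via the exchange and supporting-line inequalities rather than citing them, which makes the argument self-contained but does not change its substance.
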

\begin{proof}
The solutions of \eqref{eq:WMC} for all possible values of $G$ are the Pareto optimal solutions \cite[Section 4.7]{boyd2004convex} of the following vector-valued minimization problem with respect to the positive orthant of $\Real^2$, which we denote by $\Real^2_+$,
\begin{equation}
\label{eq:vv_wmc}
\begin{array}{cc}
\min\limits_{\boldsymbol{\omega} \in \mathbb{B}^\ngroups,~{\bf y} \in \mathbb{B}^\dim} & {\bf f}(\boldsymbol{\omega},{\bf y}) \\
\text{subject to} & \A^\mathfrak{G} \boldsymbol{\omega} \geq {\bf y}
\end{array}
\end{equation}
where ${\bf f}(\boldsymbol{\omega},{\bf y}) = \left (\|\x\|^2 - \sum_{i=1}^N y_i x_i^2, \sum_{j=1}^\ngroups \omega_j \right ) \in \Real^2_+$. Specifically, the two components of the vector-valued function ${\bf f}$ are the approximation error $E$, and the number of groups $G$ that cover the approximation. It is not possible to simultaneously minimize both components, because they are somehow adversarial: unless there is a group in the group structure that covers the entire support of $\x$, lowering the approximation error requires selecting more groups. Then there exist the so called Pareto frontier of the vector-valued optimization problem defined by the points $(E_G, G)$ for each choice of $G$, i.e.~the second component of ${\bf f}$, where $E_G$ is the minimum approximation error achievable with a support covered by at most $G$ groups.

The scalarization of \eqref{eq:vv_wmc} yields the following discrete problem, with $\lambda > 0$
\begin{equation}
\label{eq:scalar_wmc}
\begin{array}{cc}
\min\limits_{\boldsymbol{\omega} \in \mathbb{B}^\ngroups,~{\bf y} \in \mathbb{B}^\dim} & \|{\bf x}\|^2 - \sum_{i=1}^N y_i x_i^2 + \lambda \sum_{j=1}^\ngroups \omega_j \\
\text{subject~to} & \A^\mathfrak{G} \boldsymbol{\omega} \geq {\bf y}
\end{array}
\end{equation}
whose solutions are the same as for \eqref{eq:PR}. 
Therefore, the relationship between the solutions of \eqref{eq:WMC} and \eqref{eq:PR} can be inferred by the relationship between the solutions of \eqref{eq:vv_wmc} and \eqref{eq:scalar_wmc}.
It is known that the solutions of \eqref{eq:scalar_wmc} are also Pareto optimal solutions of \eqref{eq:vv_wmc}, but only the Pareto optimal solutions of \eqref{eq:vv_wmc} that admit a supporting hyperplane for the feasible objective values of \eqref{eq:vv_wmc} are also solutions of \eqref{eq:scalar_wmc} \cite[Section 4.7]{boyd2004convex}.
In other words, the solutions obtainable via scalarization belong to the intersection of the Pareto optimal solution set and the boundary of its convex hull. 
\end{proof}

\section{Convex relaxations}
\label{sec:convex_relax}

For tractability and analysis, convex proxies to the group $\ell_0$-norm have been proposed (e.g., \cite{jacob2009group}) for finding  group-sparse approximations of signals. 
Given a group structure $\GG$, an example generalization is defined as
\begin{equation}
\label{eq:atomic_norm}
\|{\bf x}\|_{\GG,\{1,p\}} := \inf\limits_{{\scriptsize \begin{array}{c} \mathbf{v}^1, \ldots, \mathbf{v}^\ngroups \in \Real^N\\\forall j, \supp(\mathbf{v}^j) = \G_j \end{array}}} \left \{ \sum_{j = 1}^\ngroups d_j \|\mathbf{v}^j\|_p: \sum_{j=1}^\ngroups \mathbf{v}^j = {\bf x} \right \} ,
\end{equation}
where $\|{\bf x}\|_p = \left ( \sum_{i=1}^N x_i^p \right )^{1/p}$ is the $\ell_p$-norm, and $d_j$ are positive weights that can be designed to favor certain groups over others \cite{obozinski2011group}. 
This norm, also called Latent Group Lasso norm in the literature, can be seen as a weighted instance of the atomic norm described in \cite{rao2012signal}, where the authors leverage convex optimization for signal recovery, but not for model selection.

One can in general use \eqref{eq:atomic_norm} to find a group-sparse approximation under the chosen group norm
\begin{equation}
\label{eq:latent_gl}
\hat{\bf x} \in \argmin\limits_{{\bf z} \in \Real^N} \left \{ \|{\bf x} - {\bf z}\|_2^2 : \|{\bf z}\|_{\GG,\{1,p\}} \leq \lambda\right \}, \;
\end{equation}
where $\lambda > 0$ controls the trade-off between approximation accuracy and group-sparsity. 
However, solving \eqref{eq:latent_gl} does not yield a group-support for $\hat{\bf x}$:
even though we can recover one through the decomposition $\{ \mathbf{v}^j\}$ used to compute $\|\hat{\bf x}\|_{\GG, \{1,p\}}$, it may not be unique as observed in \cite{obozinski2011group} for $p = 2$.
In order to characterize the group-support for ${\bf x}$ induced by \eqref{eq:atomic_norm}, in \cite{obozinski2011group} the authors define two group-supports for $p = 2$. 
The {\em strong group-support} $\breve{\mathcal{S}}({\bf x})$ contains the groups that constitute the supports of each decomposition used for computing \eqref{eq:atomic_norm}. 
The {\em weak group-support} $\mathcal{S}({\bf x})$ is defined using a dual-characterisation of the group norm \eqref{eq:atomic_norm}. 
If $\breve{\mathcal{S}}({\bf x}) = \mathcal{S}({\bf x})$, the group-support is uniquely defined. 
However, \cite{obozinski2011group} observed that for some group structures and signals, even when $\breve{\mathcal{S}}({\bf x}) = \mathcal{S}({\bf x})$, the group-support does not capture the minimal group-cover of ${\bf x}$. 
Hence, the equivalence of $\ell_0$ ``norm'' and $\ell_1$ norm minimization \cite{donoho2006compressed, candes2006compressive} in the standard compressive sensing setting does not hold in the group-based sparsity setting. 
Therefore, even for acyclic group structures, for which we can obtain exact identification of the group support of the approximations via dynamic programming, the convex relaxations are not guaranteed to find the correct group support. 
We illustrate this case via a simple example in the next section.
It remains an open problem to characterize which classes of group structures and signals admit an exact identification via convex relaxations.

\section{Case study: discrete vs.\ convex interpretability}
\label{sec:discrete_vs_convex}

The following stylized example illustrates situations that can potentially be encountered in practice. 
In these cases, the group-support obtained by the convex relaxation will not coincide with the discrete definition of group-cover, while the dynamical programming algorithm of Theorem \ref{theo:DP1} is able to recover the correct group-cover.

Let $\N = \{1, \ldots, 11\}$ and let $\GG = \{\G_1 = \{1, \ldots, 5\},~\G_2 = \{4, \ldots, 8\},~\G_3 = \{7, \ldots, 11\}\}$ be the acyclic group structure structure with $3$ groups of equal cardinality. Its intersection graph is represented in Fig.~\ref{fig:example}. 
Consider the $2$-group sparse signal ${\bf x} = [0~0~1~1~1~0~1~1~1~0~0]^\top$, with minimal group-cover $\mathcal{M}({\bf x}) = \{\G_1, \G_3\}$. 

\tikzstyle{place}=[circle,draw=black,fill=white,thick, minimum size=6pt, inner sep=0pt]
\begin{figure}
\centering
\begin{tikzpicture}[-,>=stealth',shorten >=1pt,auto,node distance=1.5cm, semithick]
	\node[place] (n1) at (0,0)    [label=below:$\G_1$] {};
	\node[place] (n2) at (1.5,1) [label=above:$\G_2$] {};
	\node[place] (n3) at (3,0)    [label=below:$\G_3$] {};
	
\draw (n1) to node {$\{4,5\}$} (n2);
\draw (n2) to node {} (n1);
\draw (n2) to node {$\{7,8\}$} (n3);
\draw (n3) to node {} (n2);
\end{tikzpicture}
\caption{\label{fig:example} The intersection graph for the example in Section~\ref{sec:discrete_vs_convex}}
\end{figure}
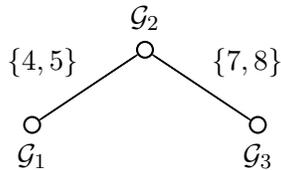

The dynamic program of Theorem \ref{theo:DP1}, with group budget $G = 2$, correctly identifies the groups $\G_1$ and $\G_3$. 
The TU linear program \eqref{eq:PR}, with $0 <  \lambda \leq 2$, also yields the correct group-cover.
Conversely, the decomposition obtained via \eqref{eq:atomic_norm} with unitary weights is unique, but is not group sparse. In fact, we have $\mathcal{S}({\bf x}) = \breve{\mathcal{S}}({\bf x}) = \GG$.
We can only obtain the correct group-cover if we use the weights $[1~d~1]$ with $d > \frac{2}{\sqrt{3}}$, that is knowing beforehand that $\G_2$ is irrelevant.

\begin{rem}
This is an example where the correct minimal group-cover exists, but cannot be directly found by the Latent Group Lasso approach. 
There may also be cases where the minimal group-cover is not unique. 
We leave to future work, to investigate which of these minimal covers are obtained by the proposed dynamic program and characterize the behavior of relaxations.
\end{rem}

\section{Generalizations}
\label{sec:generalizations}

In this section, we first present a generalization of the discrete approximation problem \eqref{eq:WMC} by introducing an additional overall sparsity constraint. 
Secondly, we show how this generalization encompasses approximation with hierarchical constraints that can be solved exactly via dynamic programming. 
Finally, we show that the generalized problem can be relaxed into a linear binary problem and that hierarchical constraints lead to totally unimodular matrices for which there exists efficient polynomial time solvers.

\subsection{Sparsity within groups}
In many applications, for example genome-wide association studies \cite{zhou2010association}, it is desirable to find approximations that are not only group-sparse, but also sparse in the usual sense (see \cite{simon2012sparse} for an extension of the group lasso). 
To this end, we generalize our original problem \eqref{eq:WMC} by introducing a sparsity constraint $K$ and allowing to individually select variables within a group. 
The generalized integer problem then becomes
\begin{equation}
\label{eq:GWMC}
\max\limits_{\boldsymbol{\omega} \in \mathbb{B}^\ngroups,~{\bf y} \in \mathbb{B}^\dim} \left \{ \sum_{i=1}^N y_i x_i^2 : \A^\mathfrak{G} \boldsymbol{\omega} \geq {\bf y}, \sum_{i=1}^\dim y_i \leq K , \sum_{j=1}^\ngroups \omega_j \leq G \right \} \; .
\end{equation}

The problem described above is a generalization of the well-known Weighted Maximum Coverage (WMC) problem. The latter does not have a constraint on the number of indices chosen, so we can simulate it by setting $K=N$. WMC is also well-known to be NP-hard, so that our present problem is also NP-hard, but it turns out that it can be solved in polynomial time for the same group structures that allow to solve \eqref{eq:WMC}.

\begin{theorem}
\label{prop:DP2}
Given an acyclic groups structure $\mathfrak{G}$, there exists a dynamic programming algorithm that solves \eqref{eq:GWMC} with complexity $\bigO(M^2GK^2)$.
\end{theorem}
\begin{proof}
The dynamic program is described in Appendix \ref{sec:dp_lp} alongside the proof that it has a polynomial running time.
\end{proof}

\subsection{Hierarchical constraints}
The generalized model allows to deal with hierarchical structures, such as regular trees, frequently encountered in image processing (e.g. denoising using wavelet trees).
In such cases, we often require to find $K$-sparse approximations such that the selected variables form a rooted connected subtree of the original tree, see Fig.~\ref{fig:hier}. 
Given a tree $\mathcal{T}$, the rooted-connected approximation can be cast as the solution of the following discrete problem
\begin{equation}
\label{eq:hier}
\max_{{\bf y} \in \BB^\dim} \left \{ \sum_{i=1}^N y_ix_i^2: \supp({\bf y}) \in \mathcal{T}_K \right \} \; ,
\end{equation}
where $\mathcal{T}_K$ denotes all rooted and connected subtrees of the given tree $\mathcal{T}$ with at most $K$ nodes.

\tikzstyle{notsel}=[circle,draw=black,fill=white,thick, minimum size=6pt, inner sep=0pt]
\tikzstyle{sel}=[circle,draw=black,fill=black,thick, minimum size=6pt, inner sep=0pt]
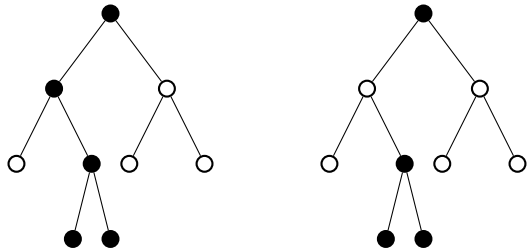
\begin{figure}
\centering
\begin{tabular}{cc}
\begin{tikzpicture}[level distance=10mm]
  \tikzstyle{level 1}=[sibling distance=15mm]
  \tikzstyle{level 2}=[sibling distance=10mm]
  \tikzstyle{level 3}=[sibling distance=5mm]
  \node[sel] {}
    child {node[sel] {} 
  		child{node[notsel] {}} 
		child{node[sel] {} 
			child{node[sel] {}} 
			child{node[sel] {}}}}
    child {node[notsel] {} 
		child{node[notsel] {}} 
		child{node[notsel] {}}};
\end{tikzpicture}
& \hskip 1cm
\begin{tikzpicture}[level distance=10mm]
  \tikzstyle{level 1}=[sibling distance=15mm]
  \tikzstyle{level 2}=[sibling distance=10mm]
  \tikzstyle{level 3}=[sibling distance=5mm]
  \node[sel] {}
    child {node[notsel] {} 
  		child{node[notsel] {}} 
		child{node[sel] {} 
			child{node[sel] {}} 
			child{node[sel] {}}}}
    child {node[notsel] {} 
		child{node[notsel] {}} 
		child{node[notsel] {}}};
\end{tikzpicture}
\end{tabular}	
\caption{\label{fig:hier} Hierarchical constraints. Each node represent a variable. (Left) A valid selection of nodes. (Right) An {\em invalid} selection of nodes.}
\end{figure}

This type of constraint can be represented by a group structure, where for each node in the tree we define a group consisting of that node and all its ancestors. 
When a group is selected, we require that all its elements are selected as well.
We impose an overall sparsity constraint $K$, while discarding the group constraint $G$.

For this particular problem, for which relaxed and greedy approximations have been proposed \cite{baraniuk1994signal, baraniuk1999optimal, jenatton2011proximal}, in Appendix \ref{sec:dp_hier}, we present a dynamic program that runs in polynomial time. 
\begin{theorem}
\label{th:dp_hier_gen}
The time complexity of our dynamic program on a general tree is $\mathcal{O}(NK^2D)$, where $D$ is the maximum number of children that a node in the tree can have.
\end{theorem}

While preparing the final version of this manuscript, \cite{cartis2013exact} independently proposed a similar dynamic program for tree projections on $D$-regular trees with time complexity $\mathcal{O}(NKD)$. 
Following their approach, we improved the time complexity of our algorithm to $\mathcal{O}(NKD)$ for $D$-regular trees. We also prove that its memory complexity is $\bigO (N \log_DK)$. 
A computational comparison of the two methods, both implemented in Matlab, is provided in Section \ref{sec:exp}, showing that our dynamic program can be up to $60\times$ faster, despite having similar {\em worst-case} time complexity.
\begin{prop}
\label{prop:dp_hier_reg}
The time complexity of our dynamic program on $D$-regular trees is $\mathcal{O}(NK^2D)$.
\end{prop}
\begin{prop}
\label{prop:dp_hier_reg_space}
The space complexity of our dynamic program on $D$-regular trees is $\bigO (N \log_DK)$.
\end{prop}

The description of the algorithm and the proof of its complexity, for both general and $D$-regular trees, can be found in Appendix \ref{sec:dp_hier}.

\subsection{Additional discrete relaxations}
By relaxing both the group budget and the sparsity budget in \eqref{eq:GWMC} into regularization terms, we obtain the following binary linear program
\begin{equation}
\label{eq:LP_hier}
(\boldsymbol{\omega}^\lambda, {\bf y}^\lambda) \in  \argmax\limits_{\boldsymbol{\omega} \in \BB^\ngroups, {\bf y} \in \BB^\dim} \left \{ \mathbf{w}^\top \mathbf{u} : \mathbf{u}^\top = [{\bf y}^\top~\boldsymbol{\omega}^\top~{\bf y}^\top],~\mathbf{Cu} \leq 0 \right \}
\end{equation}
where $\mathbf{w}^\top = [x_1^2,  \ldots, x_N^2,-\lambda_G\1_M^\top,-\lambda_K\1_N^\top]$ and $\mathbf{C} = [\Id_N,~-\A^\GG,~\mathbf{0}_N]$ and $\lambda_G, \lambda_K > 0$ are two regularization parameters that indirectly control the number of active groups and the number of selected elements.
\eqref{eq:LP_hier} can be solved in polynomial time if the constraint matrix $\mathbf{C}$ is totally unimodular. 
Due to its structure, by Proposition 2.1 in \cite{wolsey1999integer} and that concatenating a matrix of zeros to a TU matrix preserves total unimodularity, $\mathbf{C}$ is totally unimodular if and only if $\A^\GG$ is totally unimodular.
The next results proves that the constraint matrix of hierarchical group structures is totally unimodular.
\begin{prop}
Hierarchical group structures lead to totally unimodular constraints.
\end{prop}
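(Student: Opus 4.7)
The plan is to reduce the statement to the total unimodularity of the bi-adjacency matrix $\A^\GG$ alone. Indeed, the constraint matrix in \eqref{eq:LP_hier} is $\mathbf{C} = [\Id_N,~-\A^\GG,~\mathbf{0}_N]$, and by the same concatenation-with-identity argument already invoked in the text above the proposition (and in the proof of Lemma~2), $\mathbf{C}$ is TU whenever $\A^\GG$ is TU. Hence it suffices to show that $\A^\GG$ is totally unimodular for every tree-induced hierarchical group structure.

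To analyze $\A^\GG$, I would index both its rows (variables) and its columns (groups) by the nodes of the underlying tree $\mathcal{T}$. Since $\G_j$ consists of node $j$ together with its ancestors, $A^\GG_{ij}=1$ iff $i$ lies on the root-to-$j$ path, equivalently iff $j$ belongs to the subtree $D_i$ rooted at $i$. Thus row $i$ of $\A^\GG$ is the indicator vector of $D_i$, and the collection $\{D_i\}_{i \in \mathcal{T}}$ is \emph{laminar}: for any two nodes $i_1,i_2$, the subtrees $D_{i_1},D_{i_2}$ are either nested (when one of $i_1,i_2$ is an ancestor of the other) or disjoint (otherwise).

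The key structural consequence is that if one reorders the columns of $\A^\GG$ along any depth-first traversal of $\mathcal{T}$, then each subtree $D_i$ occupies a contiguous range of columns. Up to this column permutation, $\A^\GG$ becomes an interval matrix with consecutive $1$s in every row, and such matrices are classically totally unimodular. Because total unimodularity is invariant under row and column permutations, $\A^\GG$ itself is TU, which combined with the reduction of the first paragraph establishes the proposition.

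The only delicate step is the consecutive-$1$s-implies-TU fact; this is a standard result in combinatorial optimization and can also be proved directly for laminar incidence matrices through Ghouila-Houri's criterion, by signing any chosen subset of rows according to the parity of the depth of the corresponding subtree root so that, along every column (a root-to-node path), the signs telescope and partial column sums stay in $\{-1,0,1\}$. I therefore anticipate no significant technical obstacle beyond invoking these classical tools.
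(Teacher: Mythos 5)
Your proposal is correct and follows essentially the same route as the paper: reduce to total unimodularity of $\A^\GG$ via the concatenation-with-identity argument, observe that row $i$ has $1$s exactly on the groups indexed by the subtree rooted at $i$, and conclude by the consecutive-ones (interval matrix) criterion under a depth-first column ordering. The additional remarks on laminarity and the Ghouila-Houri signing are a nice supplement but not a different argument.
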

\begin{proof}
We use the fact that a binary matrix is totally unimodular if there exists a permutation of its columns such that in each row the $1$s appear consecutively, which is a combination of Corollary 2.10 and Proposition 2.1 in \cite{wolsey1999integer}. 
For hierarchical group structures, such permutation is given by a depth-first ordering of the groups. 
In fact, a variable is included in the group that has it as the leaf and in all the groups that contain its descendants. 
Given a depth-first ordering of the groups, the groups that contain the descendants of a given node will be consecutive.
\end{proof}

The regularized hierarchical approximation problem, in particular
\begin{equation}
\label{eq:hier_reg}
\max_{{\bf y} \in \BB^\dim} \left \{ \sum_{i=1}^N y_ix_i^2 - \lambda\|{\bf y}\|_0,~\supp(\y) \in \mathcal{T}_K \right \} \; ,
\end{equation}
for $\lambda \geq 0$, has already been addressed by Donoho \cite{donoho1997cart} as the ``complexity penalized residual sum-of-squares'' and linked to the CART \cite{breiman1984classification} algorithm, which can found a solution in $\bigO(N)$ time. The {\em condensing sort and select algorithm} (CSSA) \cite{baraniuk1994signal}, with complexity $\bigO(\dim \log \dim)$, solves the problem where the indicator variable $\y$ is relaxed to be continuous in $[0, 1]$ and constrains $\|\y\|_1$ to be smaller than a given threshold $\gamma$, yielding rooted connected approximations that might have more than $K$ elements.

\section{\label{sec:exp}Pareto Frontier Examples}
The purpose of these numerical simulations is to illustrate the limitations of relaxations and of greedy approaches for correctly estimating the $G$-group cover of an approximation.

\subsection{Acyclic constraints}

We consider the problem of finding a $G$-group sparse approximation of the wavelet coefficients of a given image, in our case a view of the Earth from space, see left inset in Fig.~\ref{fig:exp2_1}.
We consider a group structure defined over the 2D wavelet tree. 
The wavelet coefficients of a 2D image can naturally be organized on three regular quad-trees, corresponding to a multi-scale analysis with wavelets oriented vertically, horizontally and diagonally respectively \cite{mallat1999wavelet}. 
We define groups consisting of a node and its four children, therefore each group has $5$ elements, apart from the topmost group that contains the scaled DC term and the first nodes of each of the three quad-trees.
These groups overlap only pairwisely and their intersection graph is a tree itself, therefore leading to a totally unimodular constraint matrix.
An example is given in the right inset in Fig.~\ref{fig:exp2_2}.
For computational reasons, we resize the image to $16 \times 16$ pixels and compute its Daubechies-4 wavelet coefficients. 
At this size, there are $64$ groups, but actually $52$ are sufficient to cover all the variables, since it is possible to discard the penultimate layer of groups while still covering the entire ground set.

Figures \ref{fig:exp2_1} and \ref{fig:exp2_2} show the Pareto frontier of the approximation error $\|{\bf x} - \hat{\bf x}\|_2^2$ with respect to the group sparsity $G$ for the proposed dynamic program. 
We also report the approximation error for the solutions obtained via the totally unimodular linear relaxation (TU-relax) \eqref{eq:PR_std} and the latent group lasso formulation (Latent GL) \eqref{eq:latent_gl} with $p = 2$, which we solved with the method proposed in \cite{mosci2010primaldual}. 
Fig.~\ref{fig:exp2_2} shows the performance of StructOMP \cite{huang2011learning} using the same group structure and of the greedy algorithm for solving the corresponding weighted maximum coverage problem.

We observe that there are points in the Pareto frontier of the dynamic program, for $G = 5, 10, 30, 31, 50$, that are not achievable by the TU relaxation, since they do not belong to its convex hull. Furthermore, the latent group lasso approach often does not yield the optimal selection of groups, leading to a greater approximation error for the same number of active groups and it needs to select all $64$ groups in order to achieve zero approximation error.
It is interesting to notice that the greedy algorithm outperforms StructOMP (see inset of Fig.~\ref{fig:exp2_2}), but still does not achieve the optimal solutions of the dynamic program. Furthermore, StructOMP needs to select all $64$ groups for obtaining zero approximation error, while the greedy algorithm can do with one less, namely $63$.



\begin{figure}
\centering
\includegraphics[width=0.95\columnwidth]{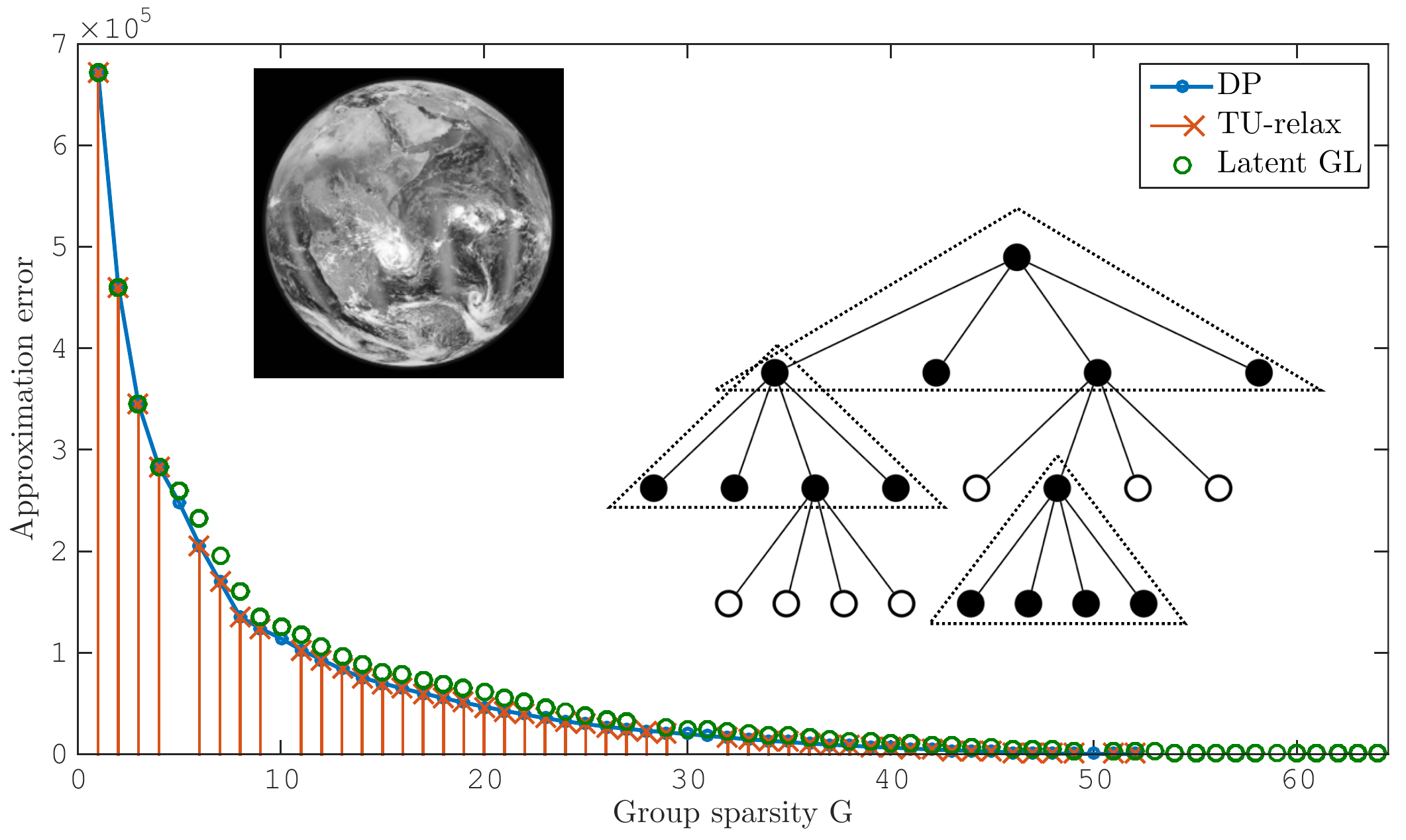}
\caption{\label{fig:exp2_1} Insets: (Left) Earth image used for the numerical simulation, after being resized to $16 \times 16$ pixels. (Right) Example of allowed support on one of the three wavelet quad-trees: The black circles represent selected variables and the empty circles unselected ones, while the dotted triangles stand for the active groups. Main plot: 2D Wavelet approximation on three quad-trees. The original signal is the wavelet decomposition of the $16 \times 16$ pixels Earth image. The blue line is the Pareto frontier of the dynamic program for all group budgets $G$. Note that, for $G \geq 52$, the approximation error for the dynamic program is zero, while the Latent Group Lasso approach needs to select all $64$ groups to yield a zero error approximation. The totally unimodular relaxation only yields the points in the Pareto frontier of the dynamic program that lie on its convex hull.}\end{figure}

\begin{figure}
\includegraphics[width=0.95\columnwidth]{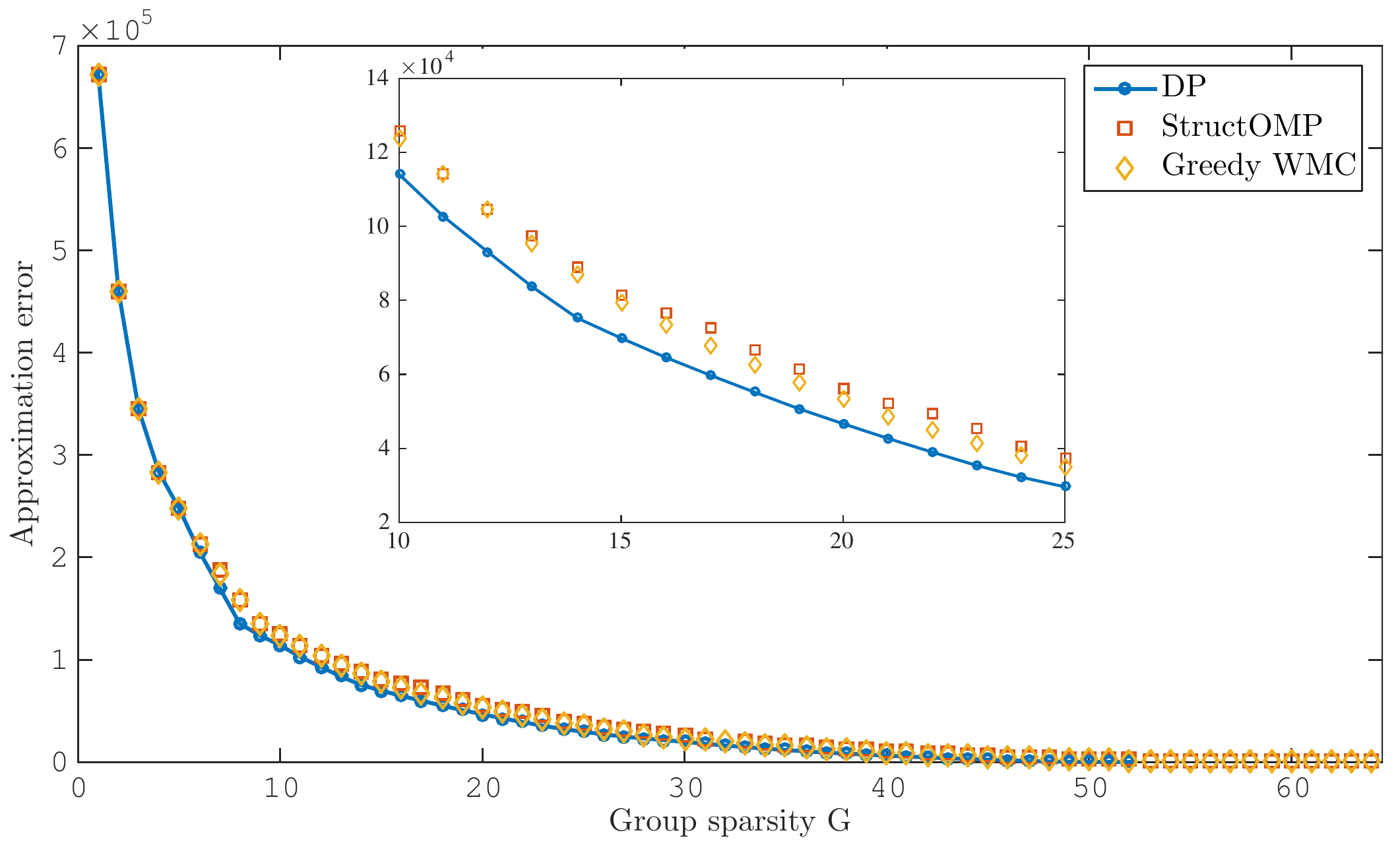}
\caption{\label{fig:exp2_2} 2D Wavelet approximation on three quad-trees. The original signal is the wavelet decomposition of the $16 \times 16$ pixels Earth image. The blue line is the Pareto frontier of the dynamic program for all group budgets $G$. 
Note that for $G \geq 52$ the approximation error for the dynamic program is zero, while StructOmp needs to select all $64$ groups to yield a zero error approximation. The greedy algorithm for solving the corresponding weighted maximum coverage problem obtains better solutions than StructOmp, but it still requires $63$ groups to yield a zero-error approximation.}
\end{figure}


\subsection{Hierarchical constraints}
We now consider the problem of finding a $K$-sparse approximation of a signal imposing hierarchical constraints.
We generate a piecewise constant signal of length $N = 64$, to which we apply the Haar wavelet transformation, yielding a $25$-sparse vector of coefficients ${\bf x}$ that satisfies hierarchical constraints on a binary tree of depth $5$, see Fig.~\ref{fig:exp}(Left).

We compare the proposed dynamic program (DP) to the regularized totally unimodular linear program approach, two convex relaxations that use group-based norms and the StructOMP greedy approach \cite{huang2011learning}.
The first convex relaxation \cite{rao2012signal} uses the Latent Group Lasso norm \eqref{eq:atomic_norm} with $p = 2$ as a penalty and with groups defined as all parent-child pairs in the tree. We call this approach Parent-Child.
This formulation will not enforce all hierarchical constraints to be satisfied, but will only `favor' them. 
Therefore, we also report the number of hierarchical constraint violations.
The second convex relaxation \cite{jenatton2011proximal} considers a hierarchy of groups where $\G_j$ contains node $j$ and all its descendants. 
Hierarchical constraints are enforced by the group lasso penalty $\Omega_{GL}({\bf x}) = \sum_{\G \in \GG} \|{\bf x}_\G\|_p$, 
where ${\bf x}_\G$ is the restriction of ${\bf x}$ to $\G$, and we assess $p = 2$ and $p = \infty$. We call this method Hierarchical Group Lasso. As shown in \cite{elhalabi2015totally}, solving $\min_\x \|\y - \x\|_2^2 + \lambda \Omega_{GL}(\x)$, for $p = \infty$, is actually equivalent to solving the totally unimodular relaxation with the same regularization parameter.
Once we determine the support of the solution, we assign to the components in the support the values of the corresponding components of the original signal. 
Finally, for the StructOMP\footnote{We used the code provided at \url{http://ranger.uta.edu/~huang/R_StructuredSparsity.htm}} method, we define a block for each node in the tree. The block contains that node and all its ancestors up to the root. By finely varying the regularization parameters for these methods, we obtain solutions with different levels of sparsity.

In Figures ~\ref{fig:exp}(Right), we show the approximation error $\|{\bf x} - \hat{\bf x}\|_2^2$ as a function of the solution sparsity $K$ for the methods.
The values of the DP solutions form the discrete Pareto frontier of the optimization problem controlled by the parameter $K$. Note that there are points in the Pareto frontier that do not lie on its convex hull, hence these solutions are not achievable by the TU linear relaxation.
As expected, the Hierarchical Group Lasso\footnote{We used the code provided at \url{http://spams-devel.gforge.inria.fr/.}} with $p=\infty$ obtains the same solutions as the TU linear relaxation, while with $p = 2$ it also misses the solutions for $K = 21$ and $K=23$.
The Parent-Child\footnote{We used the algorithm proposed in \cite{mosci2010primaldual}.}
approach achieves more levels of sparsity (but still missing the solutions for $K=2, 13$ and $15$), although at the price of violating some of the hierarchical constraints, i.e., we count one violation when one node is selected but not its parent.
The StructOMP approach yields only few of the solutions on the Pareto frontier, but without violating any constraints.
These observations lead us to conclude that, in some cases, relaxations of the original discrete problem or other greedy approaches might not be able to find the correct group-based interpretation of a signal.

In Fig.~\ref{fig:exp1_times}, we report a computational comparison between our dynamic program and the one independently proposed by Cartis and Thompson \cite{cartis2013exact}. We consider the problem of finding the $K = 200$ sparse rooted connected tree approximation on a binary tree of a signal of length $2^L$, with $L = 9, \ldots, 18$, whose components are randomly and uniformly drawn from $[0, 1]$. Despite the two algorithms have the same computational complexity, $\bigO(NKD)$ and are both implemented in Matlab, our dynamic program is between $20$ and $60$ times faster.


\begin{figure}
\centering
\begin{tabular}{cc}
\includegraphics[width=0.47\textwidth]{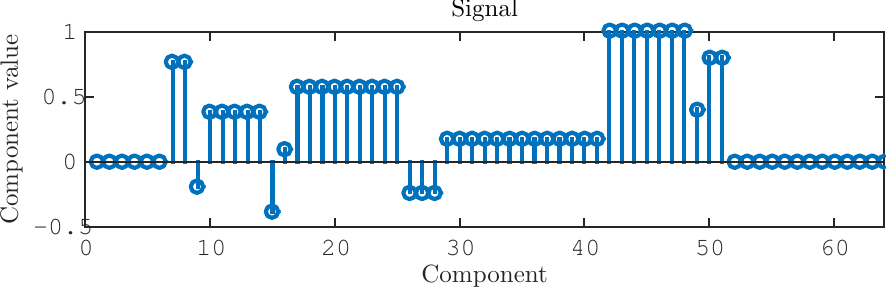} & \hspace{-0.3cm} \includegraphics[width=0.47\textwidth]{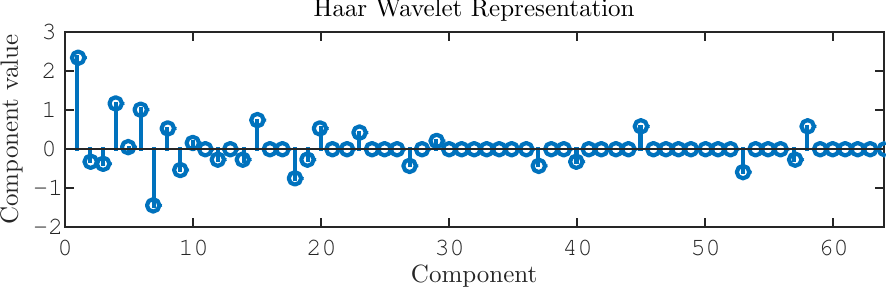} \\
\includegraphics[width=0.47\textwidth]{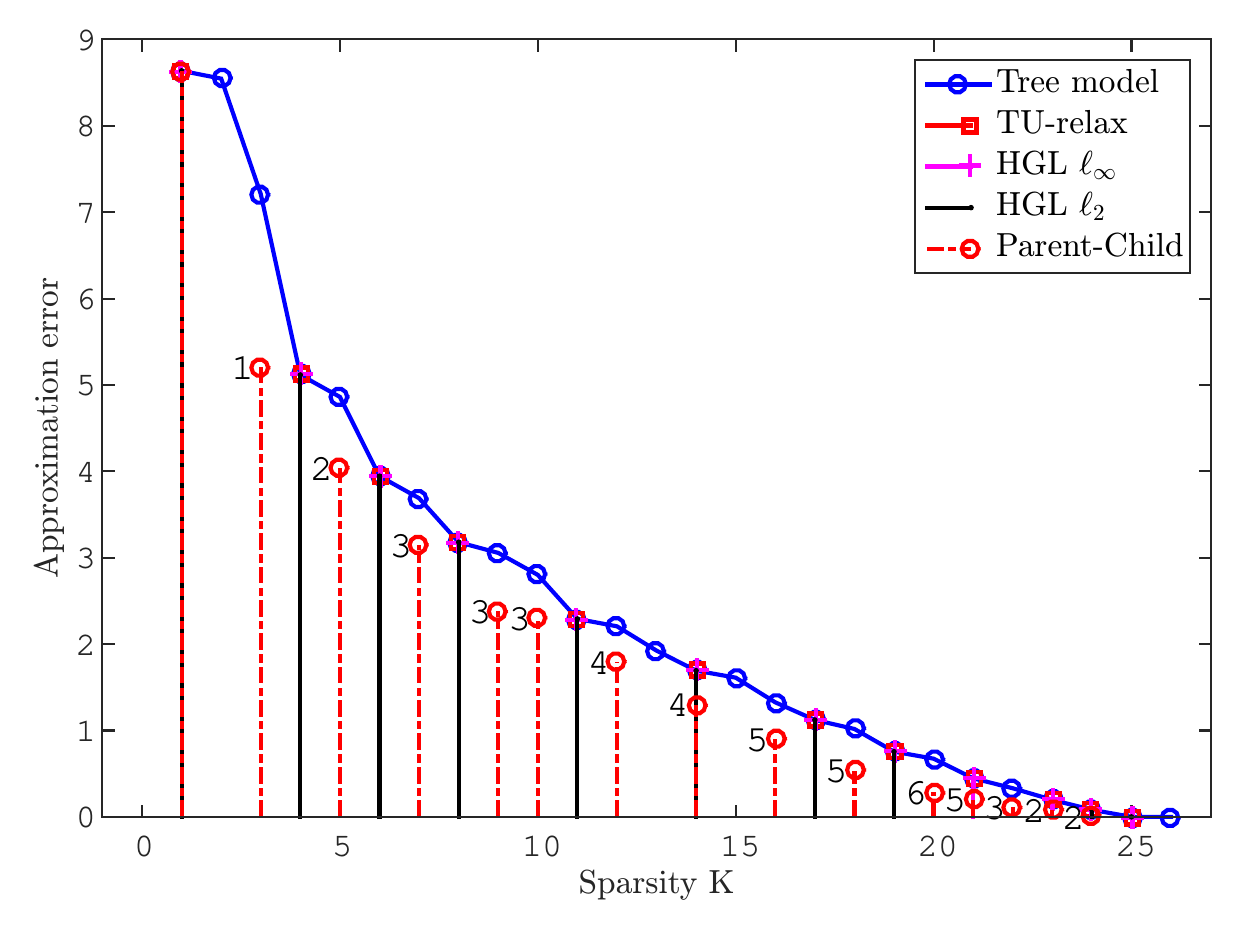} & \hspace{-0.3cm} \includegraphics[width=0.47\textwidth]{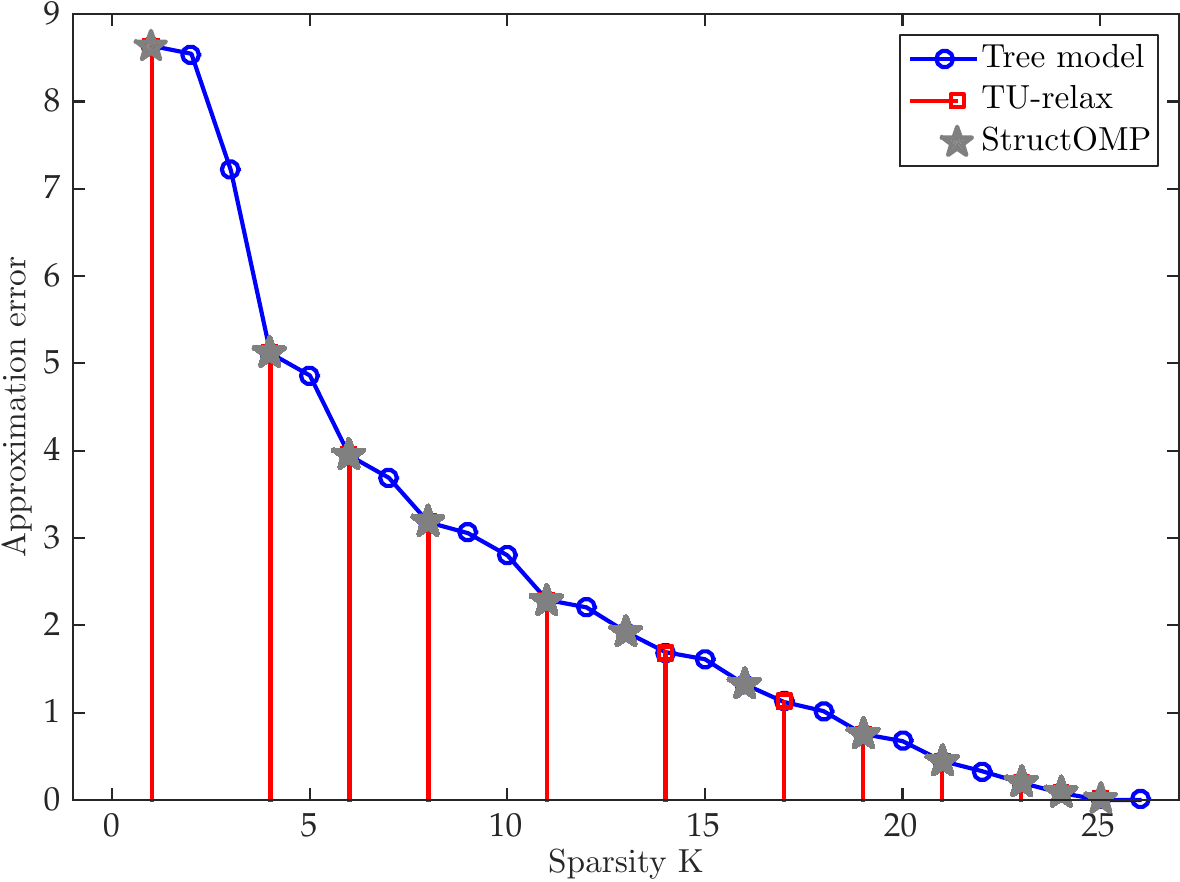}
\end{tabular}
\caption{\label{fig:exp} (Top) Original piecewise constant signal and its Haar wavelet representation. (Bottom) Signal approximation on the binary tree. The original signal is $25$-sparse and satisfies hierarchical constraints. The numbers next to the Parent-Child solutions indicate the number of hierarchial constraint violations, i.e., a node is selected but not its parent.}
\end{figure}

\begin{figure}
\noindent\begin{minipage}[b]{0.45\textwidth}
\centering
\footnotesize{
    \begin{tabular}{|c|c|c|c|}
    \hline
    N & DP & \cite{cartis2013exact} & Speed-up\\
    \hline
    $2^9$ 			& $0.007$ 	& $0.14$ 	& $20\times$\\
    $2^{10}$	 	& $0.012$ 	& $0.29$ 	& $23\times$\\
    $2^{11}$ 		& $0.025$ 	& $0.62$ 	& $25\times$\\
    $2^{12}$	 	& $0.048$ 	& $1.21$ 	& $25\times$\\
    $2^{13}$	 	& $0.093$ 	& $2.55$ 	& $27\times$\\
    $2^{14}$	 	& $0.19$ 		& $5.35$ 	& $29\times$\\
    $2^{15}$	 	& $0.37$ 		& $11.8$ 	& $32\times$\\
    $2^{16}$	 	& $0.76$ 		& $26.4$ 	& $35\times$\\
    $2^{17}$	 	& $1.54$ 		& $66.5$ 	& $43\times$\\
    $2^{18}$	 	& $3.14$ 		& $196$ 	& $62\times$\\
    \hline
    \end{tabular}}
    \label{table:exp1_times}
\end{minipage}
\hspace{1cm}
\begin{minipage}[c]{0.45\textwidth}
        \includegraphics[width=0.7\textwidth]{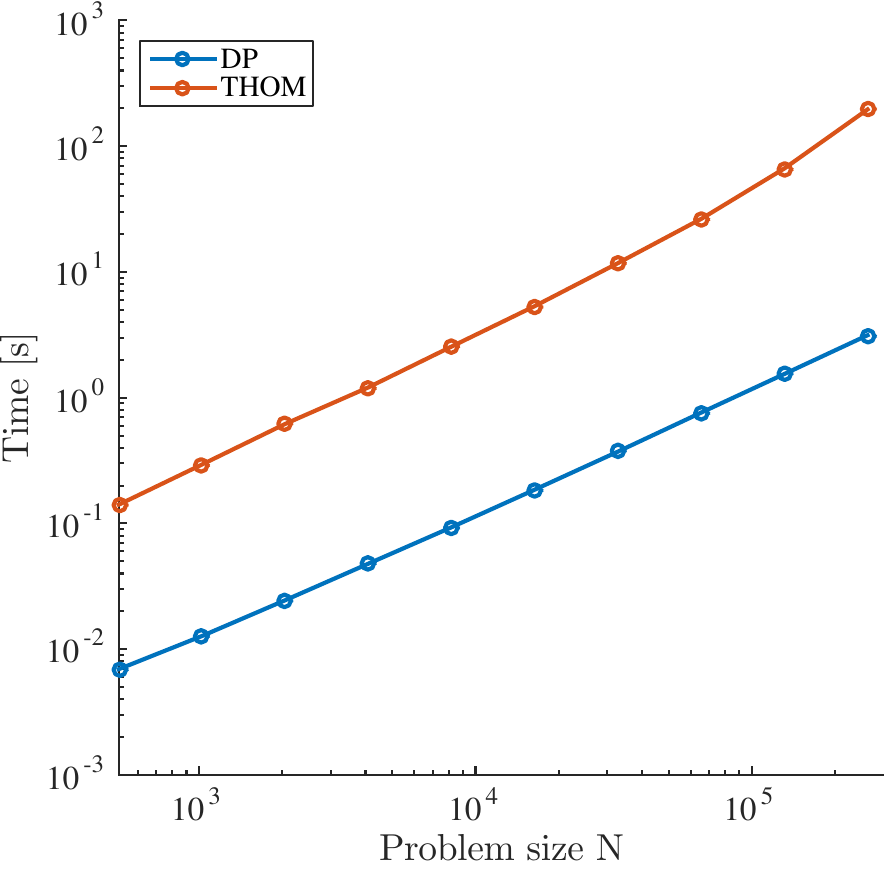}
\end{minipage}
\caption{\label{fig:exp1_times} Running times in seconds of the proposed dynamic program for hierarchical constraints and the one proposed by Cartis and Thompson \cite{cartis2013exact}. The sparsity budget is kept constant to $K = 200$ for all problem sizes.}
\end{figure}

\section{Conclusions}
\label{sec:conclusions}
Many applications benefit from group sparse representations. Unfortunately, our main result in this paper shows that finding a group-based interpretation of a signal is an integer optimization problem, which is in general NP-hard. To this end, we characterize group structures for which a dynamical programming algorithm can find a solution in polynomial time and also delineate discrete relaxations for special structures (i.e., totally unimodular constraints) that can obtain correct solutions. 

Our examples and numerical simulations show the deficiencies of relaxations, both convex and discrete, and of greedy approaches. 
We observe that relaxations only recover group-covers that lie in the convex hull of the Pareto frontier determined by the solutions of the original integer problem for different values of the group budget $G$ (and sparsity budget $K$ for the generalized model). 
This, in turn, implies that convex and non-convex relaxations might miss some important groups or include spurious ones in the group-sparse model selection. We summarize our findings in Fig.~\ref{fig:diagram}.

There remain several interesting open questions which beg for answers. 
Firstly, there still lacks an intuitive understanding of under which circumstances the relaxations are able to yield the correct solutions. 
Secondly, our analysis implicitly assumes an orthogonal basis for the description of signals. 
In many machine learning and compressive sensing applications however, the structures in signals emerge only after representing them onto an overcomplete basis, e.g. shearlets or sparse coding techniques. 
Therefore, it would be interesting to explore to which extent our results can be generalized to the overcomplete setting.

\begin{figure}
\centering
\includegraphics[width=0.6\textwidth]{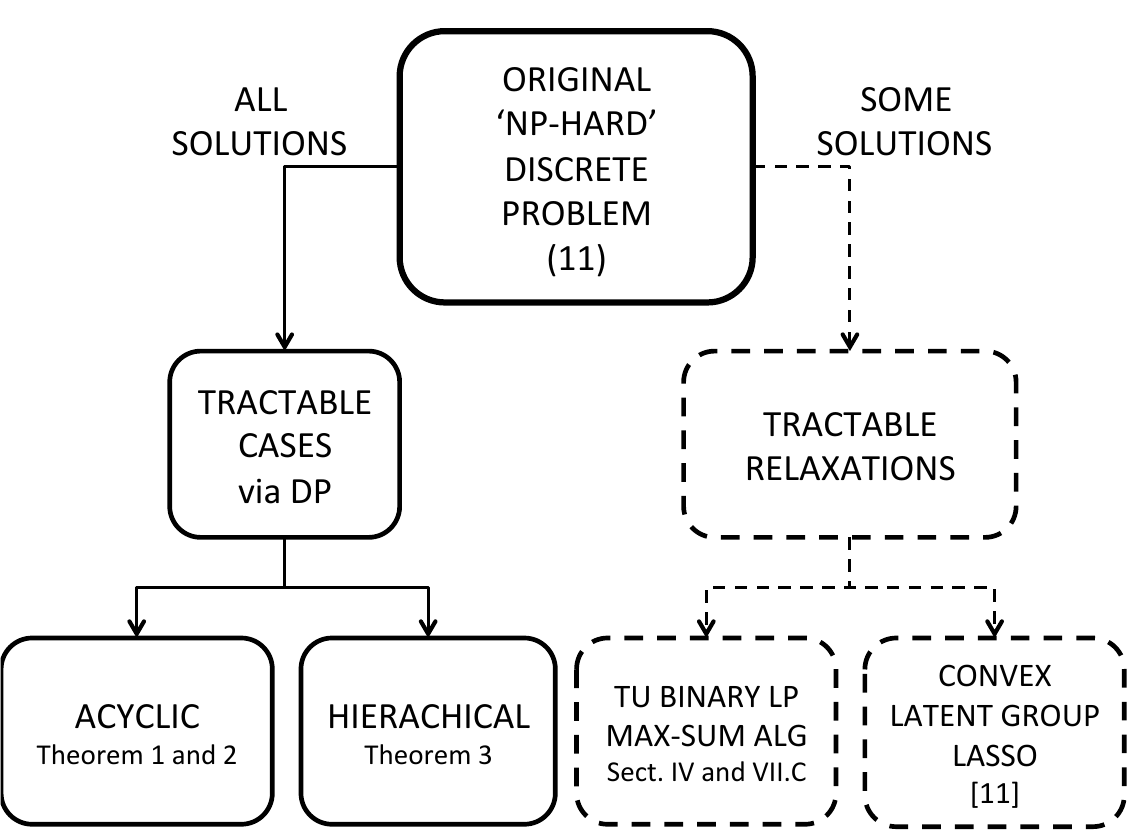}
\caption{\label{fig:diagram} Characterization of tractability for group-based interpretations.}
\end{figure}

\appendices

\section{Dynamical programming for solving \eqref{eq:GWMC} for loopless pairwise overlapping groups}
\label{sec:dp_lp}
Here, we give the proof of Theorem \ref{prop:DP2}. 
The proof of Theorem \ref{theo:DP1} follows along similar lines. 
We start by giving an intuitive understanding of the algorithm, followed by a formal description and proofs of correctness and complexity, both in time and space.

Problem \eqref{eq:GWMC} can be equivalently described by the following problem:

\paragraph*{Sparse Group Selection Problem (SGSP)} Given a signal ${\bf x} \in \Real^N$ and a group structure $\GG$ consisting of $\ngroups$ groups defined over the index set $\mathcal{N} = \{1, \ldots, N\}$, with each index having an associated (non-negative) weight (e.g., $x_i^2,~\forall~i \in \mathcal{N}$), find the optimal selection of at most $K$ indices, to maximize the sum of their weights, such that the indices are contained in a union of at most $G$ groups. In this paper, we frequently use the term {\em elements} in place of indices, and use the term {\em weight of i$^{th}$ element} to refer to the i$^{th}$ entry of the weights vector.\footnote{Note that since each element is non-negative, we can assume that the optimal solution will contain the maximum allowed $G$ groups, as well as $K$ elements, except in trivial cases. We will therefore often assume that the optimal solution has exactly $G$ groups and exactly $K$ elements. However, no generality is lost in our theorems by removing this assumption.}

In this form, the problem described above is a generalization of the well-known Weighted Maximum Coverage (WMC) problem, which is NP-hard. 
In fact WMC is just a special case of SGSP with $K = N$.
Although this makes it intractable in general, we show that this problem has some interesting structure. 
This structure allows us to build a dynamic program which can obtain the exact solution in polynomial time, for certain special classes of groups. 
We believe that this algorithm may be of independent interest outside the information theory community.

\subsection{An intuitive take on the Dynamic Programming Approach} 

\subsubsection{Failure of Na\"\i ve DP}

We first present an informal account of the ideas behind our method. 
The basic idea we use is dynamic programming, i.e., we build the solution to the global optimization problem from solutions to subproblems. 
In our case, the subproblems correspond to looking at a subset of the group structure, and solving the optimization problem for this case. 
What we could do is to start from a single group and keep adding more groups one at a time, updating the optimal solution at each step. 
One may na\"\i vely hope that such an approach would lead to the global solution. 
Unfortunately, this basic intuitive approach fails, as we illustrate next through some examples.

\begin{itemize}
\item[(a)] {\bf Example-1:} Consider the case of $N = 5$, with the weights being the vector $[5, 5, 2, 10, 6]$. For the sake of illustration, let the group structure be $\GG = \{\G_1, \G_2\}$, where $\G_1 = \{1,2\}, \G_2= \{2,3,4\}$. We wish to find the optimal solutions for the cases: 
\begin{enumerate}
	\item $K = 2, G = 1$; and 
	\item $K=3, G=2$. 
\end{enumerate}

The optimal solutions can be found simply by observation.
\begin{enumerate}
\item The optimal solution for $K=2, G=1$, has weight $15$, and involves selecting group $\G_2$, and elements $\{2,4\}$. 
\item The optimal solution for $K=3, G=2$, has weight $20$, and involves selecting both groups and elements $\{1,2,4\}$. 
\end{enumerate} 

\item[(b)] {\bf Example-2:} Consider the case of $N=5$, with the weights being the vector $[5, 5, 2, 10, 6]$. Let the set of groups be $\GG = \{\G_1, \G_2, \G_3\}$, where $\G_1 = \{1, 2\}$, $\G_2 = \{2,3,4\}$, $\G_3 = \{4,5\}$. We wish to find the optimal solution for the case: $K=4, G=2$. 
Once again, we can see that the optimal solution involves selecting groups $\G_1$ and $\G_3$, and elements $\{1,2,4,5\}$, for a total value of $26$.  
\end{itemize}

In the examples described above, the set of elements and their weights are the same. 
However, in the first example, any optimal solution for any meaningful values of the parameters G and K, involves $\G_2$. 
Yet, in the second example, we have a situation where the optimal selection does not involve $\G_2$, see Figure~\ref{fig:dp_intuition}



\begin{figure}
\centering
\includegraphics[width=0.6\textwidth]{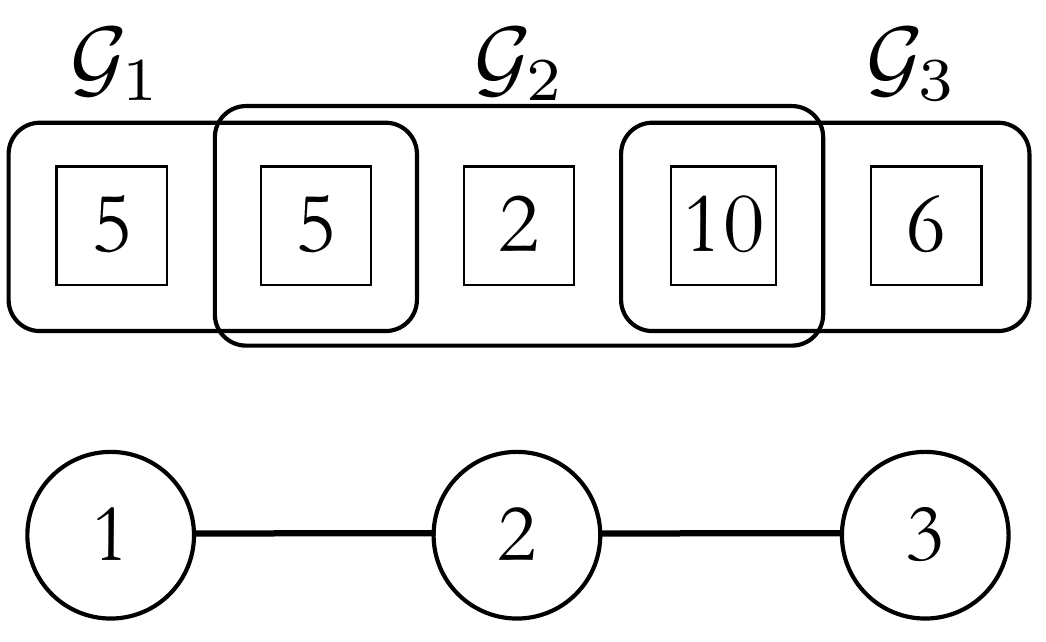}
\caption{\label{fig:dp_intuition} Failure of Na\"\i ve DP example: (top) Group structure. (bottom) Intersection graph. When we have only seen groups $\G_1$ and $\G_2$, the optimal solution to every subproblem involves choosing $\G_2$. After we explore $\G_3$, the optimal solution for $G=2, K=4$ no longer involves selecting $\G_2$.}
\end{figure}

\subsubsection{Boundary-cognizant DP}

As we illustrated above, the simple DP approach does not work.
When look at a subset of groups, some of which overlap with as yet unexplored groups, decisions regarding the overlapping groups are difficult to make. 
The reason is that the quality of a group in the view of the algorithm may decrease, if the high-weight elements in the group also happen to be contained in another overlapping group, which is seen in the future. 
While building partial solutions, we then need to consider both possibilities - an overlapping group is either included or excluded from the putative solution.
We now introduce some notation which allows us to describe these ideas more concretely.

Our algorithm is heavily based on the intersection graph of the group structure. 
Thus, we will frequently refer to the groups as `nodes' in our algorithm. 
Our approach involves exploring the nodes of the intersection graph one at a time and storing a list of optimal values from the explored nodes. 
These optimal values constitute the optimal weight of a $g$-group, $k$-element selection from the explored groups, for all $1 \leq g \leq G$ and for all $1 \leq k \leq K$. 
Further, we need to store these optimal values for each possible selection of the overlapping groups, so that we do not make decisions concerning such groups at the current step. 
In terms of the intersection graph, these overlapping groups are simply those nodes which belong to our currently explored set, but are also adjacent to some node which is not in the explored set. 
We call such nodes {\em boundary nodes}. 
Since our algorithm explores the intersection graph keeping track of all possibilities at the boundary nodes, it may fittingly be called a {\em boundary-cognizant Dynamic Program}.

Although this trick of being boundary-cognizant helps us get the correct solution, it can be expensive. 
Suppose we have $b$ boundary nodes at a certain step of the algorithm. 
Then the table of optimal values we seek to store has size $G K 2^{b}$, which is exponential in $b$. 
For an arbitrary intersection graph, this factor can indeed be exponential; for example, a complete graph with $M$ nodes will always have $M-1$ boundary nodes at the penultimate step. 
However, if we restrict the intersection graph to be a tree, then it turns out there is a way to explore the graph such that the number of boundary nodes in a graph with $M$ nodes is only $O(\log M)$. 
This property allows our algorithm to run in polynomial time on such graphs.

\subsection{Optimal substructure}
\label{sec:substructure}

We expose the optimal substructure of this problem below by highlighting two key properties: \textit{Groups-elements dichotomy} property and \textit{independence given the boundary} property. 
These provide sufficient evidence that an optimal solution to our problem can be efficiently constructed from optimal solutions to subproblems, indicating the correctness of the dynamic programming approach. Further, we will use a slight generalization of property-2 in the proof of correctness of our algorithm. 

\begin{enumerate}
\item {\em Groups-elements dichotomy:} 
Suppose we had access to an oracle who told us the set of $G$ groups that comprise the optimal solution to SGSP. Then we can easily recover the full solution using this information, by picking the $K$ largest-weight elements contained in the union of these $G$ groups.

Interestingly, the converse of the above is not true. If the oracle told us the list of $K$ elements contained in the optimal selection, but not the groups, the problem remains hard. Finding the $G$ groups that comprise the optimal solution is equivalent to finding a $G$-group cover for these $K$ elements, given that such a cover exists. If we could solve this task in polynomial time, the same algorithm would also solve the NP-Hard {\em Set Cover} problem in polynomial time.\footnote{The idea described here is not a formal reduction. It is possible that the additional structure possessed by the optimal solution would allow us to recover the groups in polynomial time. However, there seems to be no clear way to use this additional structure, so the only obvious way to recover the groups is to solve a set-cover problem, which is NP-hard.}

In a certain sense, the above shows that the difficult part of finding the optimal solution is selecting the groups. However, this does not imply that the element sparsity constraint is insignificant. It is easy to create problem instances where even a small change in $K$ significantly changes the optimal selection. \\

\item {\em Independence given the boundary:} 
Let $\GG$ be the complete set of groups, and let $\S \subset \GG$ be a subset of these groups. Let $\Bset(\S)$ be the boundary nodes of $\S$, that is the nodes in $\S$ that are connected to nodes in its complement, $\S^c$.
Once again, we assume the existence of an oracle who knows the true solution. Suppose this oracle tells us the following information: 
\begin{enumerate}
    \item The number of groups in $\S$ which are included in the optimal solution. Call this quantity $G_1$.
    \item The number of elements in the optimal solution, which occur in any of the groups in $\S$. Call this quantity $K_1$. 
    \item The boundary nodes included in the optimal solution. 
\end{enumerate} 
Then this information allows us to recover the optimal solution, by solving two independent optimization problems on the sets $\S$ and $\S^c$ respectively. \\

For ease of explanation, we refer to the set of boundary nodes included in the optimal selection as the set of `active boundary nodes', $\Bset_A(\S)$. Note that $\Bset_A(\S)$ is known as it is given to us by the oracle. Further, we call the set of elements included in $\Bset_A(\S)$ the set of `active boundary elements', or $\E_A$. \\

{\bf Recovery Method:} 
In order to recover the global optimal solution, we need to recover the selection of groups and elements in $\S$ and $\S^c$ respectively. 

We first describe the procedure for $\S$. 
Consider all possible ways of choosing $K_1$ elements contained in $G_1$ groups from $\S$, such that the set of chosen groups in $\Bset(\S)$ exactly matches $\Bset_A(S)$. 
Among these choices, the choice which has the maximum total weight of chosen elements gives us the selections of groups and elements in $\S$. 

Now we describe the procedure for $\S^c$. We know that the total number of selected groups in the set $\S^c$ equals $G_2 \triangleq G - G_1$. 
Similarly, we know that the total number of selected elements, from elements contained \emph{only} in $\S^c$ equals $K_2 \triangleq K - K_1$. 
We perform a `cleaning' operation on groups in $\S^c$, where we remove elements in $\E_A$ from these groups. 
Let the new set of groups thus obtained be called $\tilde{\S^c}$ (note that $\tilde{S^c}$ is in general not a subset of $\GG$). 
Then, we can recover the optimal selection of groups and elements in $\S^c$, by finding the maximum-weight $G_2$-group, $K_2$-element selection in $\tilde{\S^c}$. \\

{\bf Proof:}  The proof of these two statements is straightforward. 
First, we formally show how to break the true optimal solution into two disjoint components. 
After this, we argue that the two components constitute optimal solutions to smaller optimization problems. \\

Let us denote the set of groups and elements in the global optimal solution by $\GG^*$ and ${\E}^*$, respectively. 
We create two new group-element selections, roughly corresponding to $\S$ and $\S^c$, which we shall denote by $({\GG}_1,\E_1)$ and $({\GG}_2,\E_2)$ respectively. 
These two components are constructed as follows: 
\begin{enumerate}
\item The set of selected groups in $\S$ and $\S^c$ are already disjoint, so these are directly assigned to ${\GG_1}$ and ${\GG_2}$ respectively. 
\item For any element in ${\cal E}$ which occurs only in (groups in) $\S$, assign it to ${\cal E}_1$. 
\item For any element in ${\cal E}$ which occurs only in $\S^c$, assign it to ${\cal E}_2$. 
\item For any element which occurs in $\S$ as well as $\S^c$ (and hence in $\Bset(\S)$), first try to assign it to ${\cal E}_1$. That is, check if this element is contained in $\GG_1$, and if so assign the element to ${\cal E}_1$. If not, we assign it to ${\cal E}_2$. 
\end{enumerate}
We can verify the following properties:  
\begin{enumerate}
\item[i.] $\GG_1$ and $\GG_2$ form a partition of $\GG^*$, and similarly $\E_1$ and $\E_2$ form a partition of $\E^*$. 
\item[ii.] $({\GG}_1,\E_1)$, $({\GG}_2,\E_2)$ represent valid group-element selections over the sets of groups $\S$ and $\S^c$ respectively (i.e. $\E_1$ is contained in the union of groups in $\GG_1$, and similarly $\E_2$ is contained in $\GG_2$.)
\item[iii.] $(\GG_2, \E_2)$ can also be thought of as a valid selection over the set $\tilde{\S^c}$. This is because our definition of the components assigns any element in the active boundary groups to $\E_1$ over $\E_2$. 
\item[iv.] $|\GG_1| = G_1$, $|\E_1| = K_1$, $|\GG_2| = G_2$, $|\E_2| = K_2$, where $G_1$, $K_1$, $G_2$, $K_2$ are defined as above.\\
\end{enumerate}

We are now ready to prove the correctness of the recovery method.

Let us first consider $\S^c$. 
Suppose that contrary to our claim above, $(\GG_2,\E_2)$ does not constitute an optimal solution for $\tilde{\S^c}$, i.e., there exists another $G_2$-group, $K_2$-element selection on $\tilde{\S^c}$, namely $(\GG_2', \E_2')$, such that the total weight of elements in $\E_2'$ is larger than that in $\E_2$. 
Then we could improve the optimal solution by considering the group-element selection $(\GG_1 \cup \GG_2', \E_1 \cup \E_2')$. 
Note that it is impossible for $\E_1$ and $\E_2'$ to select the same element twice, and hence the above represents a valid $G$-group, $K$-element selection over $\GG$. 
Also, since $weight(\E_2') > weight(\E_2)$, we have $weight(\E_1 \cup \E_2') > weight(\E_1 \cup \E_2) = weight(\E^*)$, so this is an improvement over the selection $({\GG}^*, {\E}^*)$. 
But this contradicts the optimality of the latter solution. 
Hence, our assumption must be false, i.e., ($\GG_2, \E_2$) comprises an optimal solution to the $G_2$-group, $K_2$-element selection problem for $\tilde{\S^c}$.\\

An identical argument shows that ($\GG_1, \E_1$) represents an optimal $G_1$-group, $K_1$-element selection over $\S$, among all group-element selections for which the set of chosen nodes from $\Bset(\S)$ equals exactly $\Bset_A(\S)$. This proves the correctness of our recovery method. $\qed$ \\




\end{enumerate} 


\subsection{Overview of our Algorithm}

%

Our algorithm explores the acyclic intersection graph one node at a time, storing the optimal solution among the visited nodes and eventually leading to the optimal solution for the entire graph.
It is described by two rules: the {\em Value Update Rule} and the {\em Graph Exploration Rule}. 
\begin{enumerate}
	\item \emph{Graph Exploration Rule}: This rule takes as input a given tree graph, and outputs an order of exploring the graph so as to minimize the number of encountered boundary nodes. 
	\item \emph{Value Update Rule}:  The Value Update Rule determines how to update the list of optimal values when we explore a new node. 
\end{enumerate}

We first describe the Value Update Rule. While doing so, we assume that the nodes of the graph have been labelled $1,2,\dots, M$ in some suitable manner, and explore them in this order. In order to lay the foundation for describing the update rule, we will first define the table of optimal values maintained by our algorithm, and ensure that the given data is in suitable format.

\subsection{Table of optimal values}


We describe the set of optimal solutions stored by our Table of optimal values. 
Abstractly, this table can be thought of as a mathematical function with $5$ different parameters. These are described below:

\begin{itemize}

\item {\bf Explored Set} : $\S \subseteq \GG$\\
This is any subset of nodes of the intersection graph. It represents the set of nodes currently visited by our algorithm. 

\item {\bf Group Count} : $g \in \{1, 2, \dots, G\}$. \\
This is the maximum number of groups we are allowed to select.

\item {\bf Element Count} : $k \in \{1, 2, \dots, K\}$. \\
This is the maximum number of elements we are allowed to select.

\item {\bf Boundary Set Vector} : $\b = (b_1, b_2, \dots, b_B),\   b_i \in S \ \forall i$, with $B = \dimv(\b)$. \\
This is any subset of the explored set $\S$, represented in vector form. 
We allow $\b$ to be an {\em empty vector}, which we denote by $\emptyset.$\footnote{We do not give a precise definition of empty vector in this text. Informally, it can be thought of as a vector of $0$ elements, very similar to an empty set.}

\item {\bf Boundary Indicator Vector} : $\I_\b \in \{0,1\}^B$. \\
This is a binary vector of size $B$. Given a boundary set vector, $\b$, for each $i \in \{1, \dots, B\}$, the $i$-th component of $\I_\b$ is either $0$ or $1$, representing whether the group $\b_i$ is selected or excluded in the optimal selection. 
We also allow $\I_\b$ to be an empty vector.

\end{itemize}

We now define our optimal values function as follows.

{$\bf F(\S, g, k, \b, \I_\b)$} represents the maximum weight obtainable by selecting at most $k$ elements contained in a union of at most $g$ groups from the set $\S$, with the choice of selections among the set of boundary nodes $\b$ given by $\I_\b$. This function is defined for the entire range of its arguments mentioned above.

Although the function is defined for all $\S \subseteq \GG$, in practice we explore the nodes one at a time, in serial order. 
Thus, we only need to keep track of $M$ different sets of explored nodes, 
where the $i$-th set, $\S_i$, consists of groups $\G_1,\G_2, \dots, \G_i$, for all $ i \in \{1, \dots, M\}$. 
Furthermore, we only see $M$ different sets of boundary nodes for a given intersection graph, $\Bset(\S_i)$.
In certain intermediate steps we shall find it convenient to use in place of $\Bset(\S_i)$, a different set than the actual set of boundary nodes.

If we fix $\S = \S_i$ and $\b = \b_i$\footnote{Technically $\b_i$ is a vector, and involves both a set of elements and an ordering over the elements. But this ordering is really a matter of notation; we will care only about the set of boundary nodes, and not the order, in our algorithm.} and vary other parameters over their respective ranges, we obtain the complete list of values stored by our algorithm at the $i$-th step. 
Note that the number of such stored values equals $G \cdot K \cdot 2^{B}$, with $B = \dimv(\b)$. \\


\subsection{Data Format and Notation} 

Without loss of generality, we can assume that each group has no more than $K$ elements. 
Further, we will assume that the indices in each group are specified in decreasing order of weights. 

In case the above assumptions are not met a-priori, we can do some preprocessing on the given data. 
Since we know that each group consists of at most $N$ elements, we can pick the largest $K$ elements and then sort them in $O(N + K \log N)$ time.\footnote{ This can be done by building a max-heap of all $N$ elements and then extracting the topmost element $K$ times.}
Since we need to do this for each one of the $M$ groups, this leads to a total complexity of $O(MN + MK \log N)$. 

While describing the complexity of our main algorithm, we will assume that the groups are already represented in the above canonical form. Hence, we will not consider the above term in our expression for time complexity.


Next, we formally define some notation that we use in our description of the value update rule. 

{\bf Concatenation Operator:} Given two vectors $\x$ and $\y$ of lengths $m$ and $n$ respectively, we define the vector `$x$ concatenated with $y$', written as $\x.\y$, to be an $m+n$-length vector which consists of entries of $\x$ followed by entries of $\y$. 



{\bf Best-k operator:} We define a function $H(\mathcal{S},k)$ to represent the optimal value for choosing $k$ elements from a set $\S$. 
The set $\S$ could be a single group, a union of groups, or any well-defined collection of elements. 
As noted earlier, $H(\S, k)$ simply equals the sum of the $k$ largest weight elements in $\S$.

\subsection{Value Update Rule}

We shall now describe the Value Update Rule. This rule shows us how to find the optimal solution to SGSP, which is represented by the value: $F(\GG , G, K, \emptyset, \emptyset)$.  \\

{\em Base Case.} We start with $\S_0 = {\emptyset}$. For this case, all values of $F$ are set to $0$: 
$F(\emptyset , g, k, \emptyset, \emptyset) = 0~\forall g, k$. \\


{\em Update.} 
The update case describes how to recompute the list of optimal values when we explore a new node. 
We shall apply this rule a total of $M$ times, exploring one new node from the graph each time, and updating our table of values. 
At the end, we can simply read off the solution from the appropriate entry of the table. 

Since we explore the nodes in serial order, at the $i$-th step, our explored set will consist of nodes $1,2, \dots, i$. 
As mentioned earlier, we denote our explored set after the $i$-th step as $\S_i$, and the boundary set vector at this time as $\b_i$. 
We use the notation $\G_{j}$ to refer to the $j$-th group, which is also the $j$-th node of the intersection graph as per our chosen ordering. 
At the end of the $i$-th step, we assume that we have stored the values of $F$ for the explored set $\S_i$ and boundary set vector $\b_i$ for each possible value of parameters $g$, $k$, and the indicator variable $\I_{\b_i}$, in their respective ranges. Thus, the following values are available to us: 
$$F(\S_i,g,k,\b_i, \I_{\b_i})\quad \text{for all} \quad 1 \leq g \leq G \quad \text{and}\quad 1 \leq k \leq K \quad \text{and all}\quad  \I_{\b_i} \in {\{0,1\}}^{B_i}, \text{ where }B_i = \dimv(\b_i).$$ 
Our objective is to extend these values to the case when we have explored the $i+1$-th node. In other words, defining $\S_{i+1} \triangleq \S_i \cup \{\G_{i+1}\}$, we wish to obtain the following set of values:  \\
$$
F(\S_{i+1},g,k,\b_{i+1}, \I_{\b_{i+1}}) \quad \text{for all} \quad 1 \leq g \leq G \quad \text{and} \quad 1 \leq k \leq K \quad \text{and all} \quad \I_{\b_{i+1}} \in {\{0,1\}}^{B_{i+1}}, \quad \quad \quad
$$ 
where $\b_{i+1}$ represents the boundary nodes at time $i+1$ in vector form. 

We now describe our method for obtaining these values. 
When we first consider node $i+1$, we treat it as a new boundary node and compute the optimal values for it being included or excluded from the putative solution. 
After this, we test for boundary nodes that have fallen into the interior of the explored set.  
For these redundant boundary nodes, we no longer need to store two separate values for the node being included or excluded, so we condense these into a single value. 
Our update rule thus consists of $3$ steps:

\begin{enumerate}
\item[1)] {\em The new node is excluded}. 

In this case, we are computing the optimal value for selecting $k$ elements contained in a union of $g$ groups among the first $(i+1)$ groups when the $(i+1)$-th group is not selected, and the groups in $\Bset_{i}$ are selected as per the indicator variables. 
Since the $(i+1)$-th group is not chosen, all our groups and elements must be chosen from among the first $i$ groups, with the same restrictions on the choice of boundary nodes. 
Hence, all optimal values for this case are equal to the corresponding values for $\S_i$. 
\begin{align*}
& F({\S}_{i+1}, g, k, \b_i . (\G_{i+1}), \I_{\b_i} . (0)) = F({\S}_{i}, g, k, \b_i, \I_{\b_i})
\end{align*}
for all $1 \leq g \leq G$ and $1 \leq k \leq K$ and all $\I_{\b_i} \in \{0,1\}^{B_i}$.\\

\item[2)] Case (a): {\em The new node is included and does not overlap with any explored node}.

In this case, we are computing the optimal values for the case when the $(i+1)$-th node is selected. 
Hence we can choose at most $g-1$ nodes from the first $i$ nodes. 
We first compute the sum of the optimal value for choosing the best $\ell$ elements from the new node and the optimal value for choosing $k - \ell$ elements from $g-1$ nodes in $S_i$, for any $\ell$ such that $1 \leq \ell \leq k$.
Then, the new optimal value for each $g$ and $k$ is given by taking the maximum of these sums over $\ell$.
To ensure that our optimal values are computed with selections of nodes in $B_i$ being specified by the indicator variables, we use the same values of indicators when computing the second term in the above sum.
\begin{align*}
& F({\S}_{i+1}, g, k, \b_i . (\G_{i+1}), \I_{\b_i} . (1)) \\
&  = \max_{1 \leq \ell \leq k} \left\{ F({\S}_{i}, {\bf g-1}, \boldsymbol{ k-\ell}, \b_i, \I_{\b_i}) + H({\G_{i+1}},\ell) \right\}
\end{align*}
for all $1 \leq g \leq G$ and $1 \leq k \leq K$ and all $\I_{\b_i} \in \{0,1\}^{B_i}$. \\

\item[2)] Case (b): {\em The new node is included but overlaps with some explored nodes}.

The update rule is the same as for case (a), but the elements in the region of overlap between the new node and the {\em selected} explored nodes must not be considered as being part of the new node. 
For this step, we need to know exactly which nodes have been chosen while computing an optimal value. 
This is the reason why we need to store separate values for each boundary node.



\begin{align*}
& F({\S}_{i+1}, g, k, \b_i . (\G_{i+1}), \I_{\b_i}. (1))\\
& = \max_{1 \leq \ell \leq k} \left\{ F({\S}_{i}, {\bf g-1}, \boldsymbol{ k-\ell},\b_i, \I_{\b_i}) + H(\mathcal{C}, \ell) \right \}
\end{align*}
for all $1 \leq g \leq G$ and $1 \leq k \leq K$ and all $I_{B_i} \in \{0,1\}^{B_i}$, where
$$
\mathcal{C} \triangleq \G_{i+1} \setminus \mathop{\bigcup}_{\substack{j \in \{1,\dots,B_i\} \\ \I_{\b_i}(j) = 1}} \b_i(j)
$$
That is we ``clean'' $\G_{i+1}$ of the overlap with the currently selected boundary nodes. \\


\item[3)] {\em Condensation.}

After performing the above steps, the number of stored values will be doubled. 
We can reduce them: for each boundary node which has fallen into the interior of the explored nodes, we combine the optimal values for it being selected or excluded, into a single value by taking the larger of the two values. 
Each such operation reduces the number of stored values by half and we perform it after each value update. 
Unlike the earlier steps, this step may have to be performed multiple times in a single update. 

Suppose $\b_i'$ is the current boundary set vector for which we have maintained optimal values. 
Suppose $\G_j$ is a node in $\b_i'$ which is not present in $\b_{i+1}$. For notational convenience, we will now assume that the group $\G_j$ has been moved to the end of the $\b_i'$ vector.  Define $\b_i''$ to be the vector of length $\dimv(\b_i') - 1$, consisting of all entries of $\b_i'$ except the last. Thus, we can write ${\b_i'} = {\b_i''} . \{{\G_j}\}$. Then we can reduce the boundary set vector from $\b_i'$ to $\b_i''$, as follows: 
\begin{align*}
F({\S}_{i+1}, g, k, \b_i'', \I_{\b_i''}) = \max \lbrace &F({\S}_{i+1}, g,k, \b_i', \I_{\b_i''} . (0)), F({\S}_{i+1}, g,k, \b_i', \I_{\b_i''} .  (1)) \rbrace
\end{align*}
for all $1 \leq g \leq G$ and $1 \leq k \leq K$ and for all $\I_{\b_i''} \in \{0,1\}^{B_i''}$, where $B_i'' = \dimv(\b_i'')$. \\

\end{enumerate} 

\subsubsection*{\bf Proof of correctness}


The correctness of our algorithm relies on the correctness of the value update rule. 
Below, we argue for the correctness of this rule for each of its $3$ steps.

\begin{itemize}

\item Step 1: The correctness of this step is self-evident. \\

\item Step 2, case (a): Since this step is a special case of step 2, case (b), it is sufficient to prove correctness of the latter. \\

\item Step 2, case (b): We prove the correctness of this step using the optimal substructure property 2 described in section ~\ref{sec:substructure}. 

Our task is to find the optimal selection of $g$-groups and $k$-elements from the set $\S_{i+1} \triangleq \S_i \cup \G_{i+1}$, when $\G_{i+1}$ is selected, and nodes in $\b_i$ are selected according to $\I_{\b_i}$. We now consider only the graph consisting of nodes in $\S_{i+1}$. With reference to the substructure property, choose the set $\S$ to be equal to $\S_i$. Critically, note that all groups in $\Bset(\S)$ are contained in $\b_i$, and thus we store optimal values separately for these. 

Although the substructure property 2 was derived on a graph with no additional information, it is equally well-applicable when certain groups (such as $\G_{i+1}$, and groups in $\b_i$) are constrained to be selected or excluded in the optimal solution. 
This property had three preconditions, one of which was the knowledge of boundary nodes in the optimal solution. 
This is trivially true, since in this particular optimization problem, the selection of groups in $\b_i$ is already fixed by $\I_{\b_i}$. 
Then, the property shows us that if we also know the number of groups and elements chosen from the two parts of the graph, we can recover the optimal solution over $\S_{i+1}$ by solving two separate optimization problems over $\S_i$ and $\G_{i+1}$ respectively.

Here, we know that exactly $g-1$ groups must be selected from $\S_i$, and (obviously) one group chosen from $\{\G_{i+1}\}$. 
However, we do not know the number of elements chosen from $\S_i$. 
Hence, we consider all possibilities by varying a parameter $\ell$ for the number of selected elements contained exclusively in $\G_{i+1}$, from $1$ up to $k$. 
More precisely, $\ell$ represents the number of elements chosen from $\mathcal{C}$, where $\mathcal{C}$ is the set of elements obtained by cleaning $\G_{i+1}$ of overlap with active boundary nodes in $\b_i$.  
This leads us to solve two independent optimization problems - find the best selection of $\ell$ elements from $\mathcal{C}$, and the best $(g-1)$-group, $(k - \ell)$-element selection from $\S_i$, respecting boundary node constraints.

Solving the optimization problem over $\mathcal{C}$ is trivial: simply choose the top $\ell$ elements. Solving the problem over $\S_i$ need not actually be carried out, since we have already stored all the relevant optimal solutions in the previous step. This value is stored in the $F$-function, in the entry $F(\S_i, g-1, k-\ell,\b_i, \I_{\b_i})$. Thus, by maximizing the sum of these optimal values and the best-$\ell$ selection in $\mathcal{C}$, over all $\ell$ from $1$ up to $k$, we obtain the optimal solutions for $\S_{i+1}$. \\

\item Step 3: This is the condensation step. The correctness of this step follows from the interpretation of the objective function - $F(\S, g, k, \b, \I_{\b})$ represents the optimal values for $g$-group $k$-element selections, when the choices of groups in $\b$ are fixed by $\I_{\b}$. Thus, for groups that are not in $\b$, we need to consider both whether the node is included or excluded. 
Therefore, in order to remove a node from the set $\b$, we simply take the maximum value of the two cases. \\

\end{itemize}

\subsubsection*{\bf Running Time}
The running time of our algorithm is determined by 2 steps - Value Update rule and the Graph Exploration algorithm. 
As we explain later, the exploration rule can be implemented independently and is computationally much faster, so the time complexity is determined by the value update rule. We analyze the complexity of each step of the update rule below. \\

{\em Complexity of step 1}:  
All optimal values for this case are simply the optimal values computed before the node is explored. 
Thus, the update in this case corresponds simply to a table-copying operation. 
In fact, this copying can be avoided entirely by some clever bookkeeping; all we need to do is remember where the appropriate values are stored in memory. 
Thus, this step is very inexpensive from a computational point of view. \\

{\em Complexity of step 2, case (a)}:  
Observe that the total number of values to be computed on the LHS of the update rule equals $G K 2^{B_i}$. 
To compute one such value, we need to take a maximum over $K$ different numbers on the RHS. We will show that each of these numbers can effectively be obtained in $O(1)$ time. 
Computing one of these numbers involves the sum of two terms. 
The first term is an optimal value that is already stored, so it merely involves a table lookup. 
The second term, $H({\G_{i+1}},\ell)$ involves taking the sum of $\ell$ largest numbers in the group $\G_{i+1}$. 
Since the elements in $\G_i$ are described to us in descending order of weights (by assumption), this is equivalent to finding the sum of the first $\ell$ elements. 
Since each successive sum differs from the previous sum in only one element, we can compute each sum by doing just one additional operation.  
Hence, computing the $K$ different numbers on the RHS takes only $\bigO(K)$ time. 
Combining this with the total number of values on the LHS, gives us an expression for complexity as $\bigO(G\, K^2 \, 2^{B_i})$. \\  

{\em Complexity of step 2, case (b):} 
The new operation that we need to perform here, compared to case (a), is the ``cleaning'' operation performed on the $i+1$-th node. 
This operation is independent of the parameters $g, k, \ell$, and depends only on the indicator variables $\I_{\b_i}$. 
Hence, we can perform our updates by first fixing $\I_{\b_i}$, and then varying $g$ and $k$. 
In this way we do the cleaning operation a total of $2^{B_i}$ times. 
The time required for the cleaning operation is equal to the time required to go through each of the $K$ elements in $\G_{i+1}$, and checking whether the element is also contained in any of the groups whose indicator variable is set to $1$. 
By doing some simple preprocessing (e.g. sorting indices in some canonical order), checking membership of an element in a group can be done in $\bigO(\log_2 K)$ time, by binary search. 
Thus, the time required for one cleaning operation is $\bigO(B_i K \log K)$. 
Hence the total time required for all cleaning operations in one step equals $\bigO(2^{B_i} B_i K \log K)$. 
Combining this with the expression obtained in step 2, case (a), the time complexity of this update step equals $\bigO(GK^2 2^{B_i} + K B_i 2^{B_i} \log K)$. \\

{\em Complexity of step 3:} 
Since condensation removes an explored node from the boundary set forever, it will have to be performed at most $M$ times in the entire algorithm. 
Since the set of boundary nodes at each step is fully determined by the intersection graph and the exploration ordering, these can be precomputed without significant time cost. 
Hence, we assume these are available to us and ignore their complexity. 
Then the complexity of a single condensation step is determined only by the number of values that need to be condensed, and is given by $\bigO(G K 2^{B_i'})$, which also equals $\bigO(G K 2^{B_i})$. \\

{\em Overall time complexity:}
Among the above, the most expensive case is step 2, case (b). 
The complexity of this step as obtained earlier equals $\bigO(GK^2 2^{B_i} + K B_i 2^{B_i} \log K)$, for the $i+1$-th value update. 
We need to perform this step $M$ times, with the parameter $i$ varying from $0$ to $M-1$ in the above expression.

Let $B^*$ be the maximum number of boundary nodes encountered by the algorithm at any step, i.e., $B^* = \max_i B_i$. 
Then the running time of our update algorithm is bounded by $\bigO(M(2^{B^*} K^2 G + 2^{B^*} {B^*} K \log K))$. 
Our graph exploration rule allows us to explore the graph so that $B^*$ is logarithmic in $M$, specifically $B^* \leq (\log_2 M +1)$. 
Hence $2^{B^*} = \bigO(M)$. 
Using this in our above expression, we see that the complexity becomes $\bigO(M^2 K^2 G + M^2 K \log M \log K)$  which shows that our algorithm is polynomial time. 
If we ignore logarithmic terms, we can write the complexity more compactly as $\bigO(M^2 K^2 G)$.

\subsubsection*{\bf Space Complexity and Backtracking} 
We now look at the amount of space (memory) required by our algorithm. 
To account for this, we also need to describe how we will backtrack, i.e., how we find the optimal selection of groups and elements. 
Note that the method described above yields the optimal value for selecting $K$ elements from $G$ groups, but does not immediately tell us which groups are selected. 
We chose a backtracking method which is time-efficient, but involves storing a fair amount of data. 
Specifically, we store the optimal values obtained at each step of the value update rule prior to condensation, i.e., $F(\S_i, g, k, \b_{i-1}.\{\G_i\}, \I_{\b_{i-1}}.\{0\})$ and $F(\S_i, g, k, \b_{i-1}.\{\G_i\}, \I_{\b_{i-1}}.\{1\})$ for all $1 \leq g \leq G$ , $1 \leq k \leq K$, $\I_{\b_{i-1}} \in \{0,1\}^{B_{i-1}}$, $i \in \{1,\dots,M\}$. 
Thus the number of values we shall need to store is at most $MGK 2^{B^*}$, which can be simplified to $\bigO(M^2 K G)$ using $2^{B^*} = \bigO(M)$ (due to our graph exploration algorithm). \\ 

Our algorithm for backtracking is as follows: 
We start from the $M$-th node and work backwards, determining the number of elements selected from each group. 
For the $M$-th group, we look at the optimal value for $G$ groups and $K$ elements, for the 2 cases when $\G_M$ is selected or unselected. 
The value which is the larger of these two forms our optimal solution, and thus tells us whether or not $\G_M$ is chosen in the optimal selection. 
If the optimal values stored at the $M-1$-th step involve other boundary nodes besides node $M$, we maximize over all selections of these boundary nodes, since we don't care about any particular nodes being selected in the optimal solution. 
We also remember the assignment of the indicator variables which allows us to obtain the largest value of $F$, since it tells us which nodes in $\b_{M-1}$ are included in the optimal solution. 
If we find that $\G_M$ is not chosen in the optimal selection, then we can ignore that group and simply find the optimal $G$-group, $K$-element selection on $M-1$ groups. 

If $\G_M$ is chosen, however, we must determine the number of elements that are selected from $\G_M$, after it is cleaned of elements from other selected boundary nodes. 
We do this by repeating step 2, case (b), of the value update rule, and noting the optimal value of the parameter $\ell$ which is used in computing a given value on the LHS. 
For the $M$-th group, we are specifically concerned with the optimal value for $g = G$ and $k = K$ on the LHS, and we must choose the indicator variables to maximize the value of F. 
Noting the value of $\ell$ which gives the optimum on the RHS tells us the number of elements chosen from $\G_M$ in the optimal selection, after cleaning any overlapping selected boundary nodes. 
Suppose this value is $\ell_1$. 
Then we now need to solve a smaller problem - find the optimal selection of $G-1$ groups and $K - \ell_1$ elements from groups $1, \dots, M-1$, with the nodes in $\b_{M-1}$ fixed to the maximizing value of their indicator variables. 
Clearly, we can repeat the above procedure on this smaller problem, and hence recursively determine the entire optimal selection. \\

It can be verified that the running time of the above algorithm is somewhat smaller than the update rule. 
Thus, the overall expression for time complexity is unchanged even when we account for backtracking.


\subsection {Graph Exploration Rule}
We determine the order with which the nodes are picked by a value associated to each subtree of the graph, which we call the $D$-value.
In the following, we describe how it is computed, how it depends logarithmically on the number of nodes in the graph and how the number of boundary nodes is bounded by the $D$-value. 

We start with some definitions.

\begin{definition}
Given a graph ${\mathbb G} = ({\cal V, \cal E})$, and an `explored set' $\cal S \subseteq \cal V$ of its nodes, a node $v \in \cal V$ is said to be a {\em boundary node} in ${\mathbb G}$ with respect to ${\cal S}$ if $v \in \cal S$ and $\exists u \in {\cal V}$ such that $u \notin {\cal S}$ and $(u,v) \in {\cal E}$.
\end{definition}

\begin{definition}
A {\em rooted tree graph} ${\mathbb G} = ({\cal V, \cal E}, r)$ is a tree graph with vertices ${\cal V}$ and edges ${\cal E}$, and a specific node $r \in {\cal V}$ designated as the root. 
\end{definition}

\begin{definition}
The {\em rooted subtrees} of a rooted tree graph ${\mathbb G} = ({\cal V, \cal E}, r)$ are the $d$ rooted tree graphs obtained as components when the root of ${\mathbb G}$ is deleted. The roots of the subtrees are the unique nodes which were adjacent to $r$ in ${\mathbb G}$. Note that $d$ is the degree of $r$ in ${\mathbb G}$.
\end{definition}

\begin{definition}
The {\em D-value} of a rooted tree graph is a non-negative integer associated with the graph. We will define the D-value algorithmically later.
\end{definition} 

{\bf Exploration Rule:}
Given a rooted tree graph $\mathbb G$, we first order all rooted subtrees with respect to the the $D$-value, so that $D_1 \geq \ldots \geq D_R$ for subtrees $T_1, T_2, \ldots, T_R$.
We then pick the subtrees in the order $\{T_1,\text{root}, T_2, \ldots, T_R\}$ and recurse until the explored subtree has only one node, see Fig.~\ref{fig:node_pick}.

\begin{figure}
\centering
\includegraphics[width=0.7\textwidth]{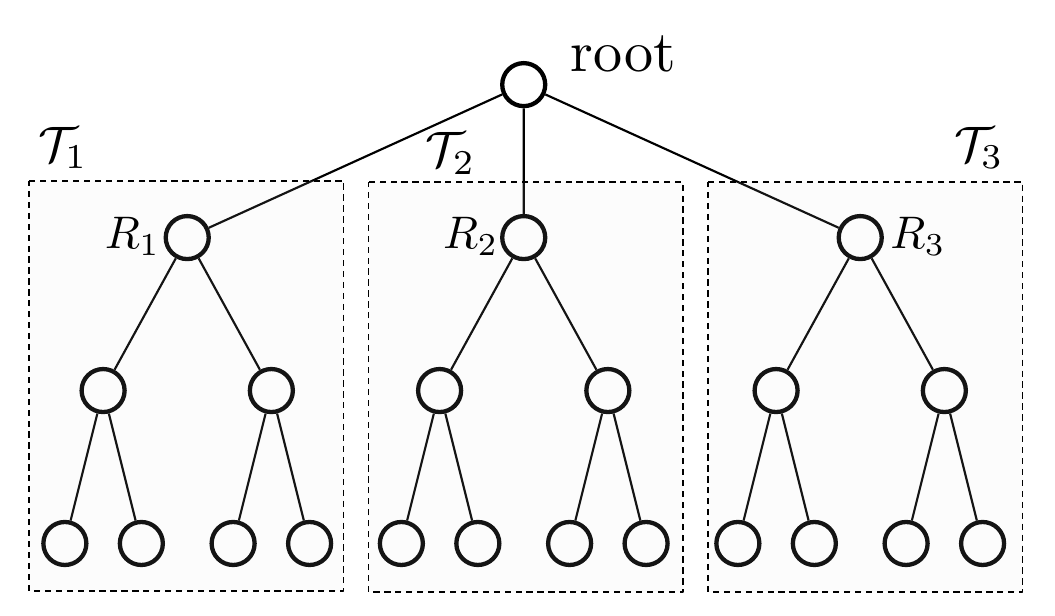}
\caption{\label{fig:node_pick} Graph Exploration Rule: explore nodes in the order $\mathcal{T}_1, \text{root}, \mathcal{T}_2, \mathcal{T}_3$ where $D_1 \geq D_2 \geq D_3$. For the subtree $\mathcal{T}_1$, the node connected to $\texttt{root}$ should be considered the root of $\mathcal{T}_1$, which we denote by $R_1$; similarly for the other subtrees.}
\end{figure}

{\bf Computing $D$-values:}
The procedure for computing the $D$-values is also recursive. 
If the tree has only one node, $D = 1$.
Now, assume the $R$ subtrees at a node $Q$ have values $D_1 \geq \ldots \geq D_R$. 
Then, $D(Q) = \max( D_1, D_2+1 )$. 
In case there is no second subtree, $D(Q) = D_1$.
We then have the following bound on the $D$-values.

\begin{lemma}
\label{lem:dvalue}
The $D$-value of a rooted tree graph is logarithmic in the number of nodes, i.e. $D(G) \leq \log_2(M) +1$.
\end{lemma}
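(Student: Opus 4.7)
The plan is to prove the bound by strong induction on the number of nodes $M$. The base case $M=1$ is immediate: a single node has $D=1=\log_2(1)+1$. For the inductive step, I would fix a tree with root $Q$, let $T_1,\ldots,T_R$ be its rooted subtrees with node counts $m_1,\ldots,m_R$ (so $m_1+\cdots+m_R=M-1$), and assume them ordered by $D$-value, $D_1\geq D_2\geq\cdots\geq D_R$. By definition $D(Q)=\max(D_1,D_2+1)$, which naturally splits the argument into two cases.

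In the first case, $D(Q)=D_1$; since $m_1\leq M-1<M$, the inductive hypothesis applied to $T_1$ yields $D_1\leq \log_2(m_1)+1\leq \log_2(M)+1$, and we are done. The nontrivial case is $D(Q)=D_2+1$. Here the key observation I would invoke is the contrapositive form of the inductive hypothesis: for any rooted tree $T$ with $D(T)\geq d$, we must have $|T|\geq 2^{d-1}$. Since $D_1\geq D_2$ by the chosen ordering, both $T_1$ and $T_2$ satisfy $D(T_i)\geq D_2$, and hence $m_1,m_2\geq 2^{D_2-1}$. Therefore
\[
M\;\geq\;1+m_1+m_2\;\geq\;1+2\cdot 2^{D_2-1}\;=\;2^{D_2}+1,
\]
which rearranges to $D_2\leq \log_2(M-1)\leq \log_2(M)$, and so $D(Q)=D_2+1\leq \log_2(M)+1$, closing the induction.

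The only real subtlety I foresee is invoking the hypothesis in its contrapositive, size-from-depth form cleanly; I would either state it as an explicit corollary of the main induction hypothesis before entering the second case, or equivalently rephrase the whole induction in terms of $f(d)=\min\{|T|:D(T)\geq d\}$ and establish $f(d)\geq 2f(d-1)+1=2^d-1$ directly, from which $D(G)\leq \log_2(M+1)\leq \log_2(M)+1$ follows at once. The argument is tight (up to the additive slack between $\log_2(M+1)$ and $\log_2(M)+1$) on perfectly balanced binary trees, which provides a useful sanity check, and there are no other cases to worry about because the recursion for $D$ only ever refers to the top two subtrees.
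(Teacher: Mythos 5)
Your proof is correct and is essentially the paper's argument in a different packaging: the paper inducts on the $D$-value via the quantity $N(D)=\min\{|T| : D(T)=D\}$ and shows $N(D)\geq 2^{D-1}$ using exactly your key observation that achieving $D$-value $D$ through the second branch forces two disjoint subtrees of $D$-value $D-1$, which is the same as your $f(d)$ reformulation. Your version is, if anything, slightly tidier (the explicit case split on $\max(D_1,D_2+1)$ avoids the paper's ``smallest counterexample'' phrasing, and keeping the $+1$ for the root yields the marginally sharper $D\leq\log_2(M+1)$), so no changes are needed.
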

\begin{proof}
Let $D$ be a positive integer and $N(D)$ be the minimum number of nodes that a rooted tree must have in order to have $D$-value of D. 
We prove by induction that 
\begin{equation}
\label{eq:dvalue}
N(D) \geq 2^{D-1}.
\end{equation} 

{\em Base case: $D = 2$}. A tree with only one node will have a $D$-value of 1. So to have a $D$-value of 2, we require a graph with at least 2 nodes. Hence \eqref{eq:dvalue} is satisfied.

{\em Inductive case: $D > 2$}. Let $\mathcal{T}$ be a smallest (i.e. minimum node) rooted tree graph whose $D$-value is equal to $D$. 
Spread out $\mathcal{T}$ in the form of root and subtrees. 
Let the subtrees be $\mathcal{T}_1, \mathcal{T}_2, \ldots, \mathcal{T}_k$, with corresponding $D$-values $D_1, D_2, \ldots, D_k$. 
Without loss of generality, assume that $D_1 \geq D_2 \geq \ldots \geq D_k$. 
By definition,  $D(\mathcal{T}) = \max(D_1, D_2+1)$. 

By our assumption, $\mathcal{T}$ is a minimum-node graph with $D$-value equal to $D$, hence we cannot have $D_1 =  D(G) = D$, since that would give us a smaller rooted tree graph ($\mathcal{T}_1$) with a $D$-value of $D$. 
This means that $D_1 < D$, and since $D = max(D_1, D_2 + 1)$, hence $D_2 + 1 = D$, i.e. $D_2 = D - 1$. 
Since $D_1 \geq D_2= D-1$ and $D_1 < D$, then $D_1 = D-1 = D_2$. 
Thus, the graph $\mathcal{T}$ has 2 subtrees ($\mathcal{T}_1$ and $\mathcal{T}_2$), with $D$-values of $D-1$ each.
By definition, any rooted subtree with a $D$-value of $D-1$ must have at least $N(D-1)$ nodes. 
By our induction hypothesis, $N(D-1)  \geq 2^{D-2}$ . 
Therefore, $\mathcal{T}$ has at least $2\times 2^{D-2} = 2^{D-1}$  nodes. 
But since $\mathcal{T}$ was the smallest rooted tree graph with $D$-value of $D$, this means that $N(D) \geq 2^{D-1}$, as required.
\end{proof}

We now link the number of boundary nodes visited by the algorithm to the $D$-value of the intersection graph.

\begin{lemma}
\label{lem:boundary}
The total number of boundary nodes encountered by the graph exploration algorithm cannot exceed the $D$-value of the graph.
\end{lemma}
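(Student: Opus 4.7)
The plan is to prove the lemma by strong induction on the $D$-value of the tree (equivalently, on the number of nodes, by Lemma~\ref{lem:dvalue}). To make the induction go through cleanly, I would carry the slightly stronger invariant that for any rooted subtree $T$ explored via the node-picking rule, the number of simultaneously existing boundary nodes inside $T$ never exceeds $D(T)$, where ``inside $T$'' includes the possibility that the root of $T$ is counted as boundary due to an unexplored parent outside $T$. The base case is a single-node tree: $D = 1$, and during its exploration there is at most one boundary node (the node itself, due to its parent in any containing tree, or zero if $T$ is the whole graph).

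For the inductive step, let $T$ have root $r$ and subtrees $T_1, \ldots, T_R$ ordered so that $D_1 \geq D_2 \geq \cdots \geq D_R$, giving $D(T) = \max(D_1, D_2 + 1)$. The node-picking rule processes $T_1$ first, then $r$, then $T_2, \ldots, T_R$. I would then track the boundary count in each phase. While $T_1$ is being explored, the inductive hypothesis bounds the count inside $T_1$ by $D_1$, and nothing outside $T_1$ has been explored yet so nothing outside can be boundary. After $T_1$ completes and before $r$ is explored, only the root $R_1$ of $T_1$ is boundary (adjacent to the unexplored $r$), contributing $1$. Once $r$ itself is explored, $R_1$ loses its sole unexplored neighbor while $r$ becomes boundary via the unexplored roots $R_2, \ldots, R_R$, again contributing $1$. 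Finally, while $T_j$ is being explored for $j \geq 2$, the inductive hypothesis bounds the boundary count inside $T_j$ by $D_j \leq D_2$, and $r$ adds at most $1$ since its neighbors outside $T_j$ (the roots of later subtrees or the parent of $T$) may still be unexplored; previously processed subtrees $T_1, \ldots, T_{j-1}$ contribute nothing because all their nodes have only explored neighbors. Thus the total in this phase is at most $D_2 + 1$, and taking the maximum over all phases gives $\max(D_1, D_2 + 1) = D(T)$, as required.

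The main obstacle will be pinning down the exact notion of ``boundary node inside $T_j$'' so the inductive hypothesis can be applied without double-counting or missing contributions from the shared neighbor $r$. Concretely, the bound $D_j$ from induction on $T_j$ already permits $R_j$ to be boundary due to an unexplored parent; in our setting the parent $r$ is already explored during $T_j$'s turn, so $R_j$ is boundary only if it still has unexplored children in $T_j$. Reconciling these two bookkeeping regimes so that the inductive invariant propagates cleanly, while keeping the single ``$+1$'' for $r$ from being absorbed twice, is the most delicate part of the argument; once this is handled, the phase-by-phase estimate above collapses directly to the desired bound.
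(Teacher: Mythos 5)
Your proof is correct and follows essentially the same route as the paper: your ``stronger invariant'' allowing the root of a subtree to be boundary due to an unexplored parent is precisely the paper's \emph{ghost node} device ($B^*_G$), and your phase-by-phase analysis ($D_1$ during $\mathcal{T}_1$, then $1$ for the root, then $D_j + 1 \leq D_2 + 1$ during each later $\mathcal{T}_j$, maximized to $\max(D_1, D_2+1) = D(\mathcal{T})$) matches the paper's inductive step exactly. The bookkeeping subtlety you flag at the end is resolved in the paper simply by noting that the ghost-node count for $\mathcal{T}_j$ is an upper bound on the actual in-run count (since the already-explored parent can only reduce $R_j$'s chances of being boundary), so no reconciliation or double-counting issue actually arises.
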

\begin{proof}

Let $\mathcal{T}$ be the given rooted tree graph, with $M$ nodes. 
We shall consider the number of boundary nodes when there is a {\em ghost node} connected to the root node. 
The ghost node is a hypothetical node which is not really a part of the graph, but still makes adjacent explored nodes count as boundary nodes.
The ghost node captures the fact when we are running the algorithm recursively on a subtree, there will be an additional (potentially unexplored) node connected to the root of the subtree, which may lead to the root being counted as a boundary node.
Let $B^*(\mathcal{T})$ denote the maximum number of boundary nodes encountered on $\mathcal{T}$ when we pick nodes according to our algorithm, and let $B^*_G(\mathcal{T})$ represent the same when we also have the ghost node. 
Clearly, $B^*_G(\mathcal{T}) \geq B^*(\mathcal{T})$, hence it is enough to prove the following:
\begin{equation}
\label{eq:boundary}
B_G^*(\mathcal{T}) \leq D(\mathcal{T}).
\end{equation}
We prove this by strong induction on $M$.

{\em Base Case.} Suppose the rooted tree graph $\mathcal{T}$ has only $1$ node. 
Then the maximum number of boundary nodes encountered is obviously $1$, which is equal to the $D$-value of the graph (by definition). 
Hence $B^*_G(\mathcal{T}) \leq D(\mathcal{T})$.

{\em Inductive Case.} 
When the graph $\mathcal{T}$ consists of $M$ nodes, $M > 1$, consider the graph to be spread out in the form of root and subtrees. 
Compute the $D$-values for each rooted subtree, where w.l.o.g., $D_1 \geq D_2 \geq \ldots D_k$. 
Let $\mathcal{T}_1, \mathcal{T}_2, \ldots, \mathcal{T}_k$ be the corresponding subtrees. 
By definition, our algorithm explores nodes in the sequence:  $\mathcal{T}_1, \text{root}, \mathcal{T}_2, \mathcal{T}_3, \ldots \mathcal{T}_k$.

Since each subtree has strictly fewer than $M$ nodes, each subtree satisfies \eqref{eq:boundary} by the induction hypothesis. 
Also, notice that when exploring the subtree $\mathcal{T}_1$ of $\mathcal{T}$, the number of boundary nodes encountered is less than or equal to the number of boundary nodes encountered when exploring $\mathcal{T}_1$ as a standalone rooted-tree-graph, with a ghost node connected to its root. 
By definition, this is exactly equal to $B^*_G(\mathcal{T}_1)$, which by our induction hypothesis is bounded by $D_1$. Therefore, the number of boundary nodes encountered while exploring $\mathcal{T}_1$ in $\mathcal{T}$ cannot exceed $D_1$. Once we are finished with $\mathcal{T}_1$, we pick the root, so the total number of boundary nodes is $1$. 
We now proceed to pick $\mathcal{T}_2$. 
By a similar argument, the maximum number of boundary nodes in $\mathcal{T}_2$ at any point cannot exceed the number of boundary nodes encountered while exploring $\mathcal{T}_2$ as a standalone graph with attached ghost node. In addition, the root of $\mathcal{T}$ can contribute at most 1 additional boundary node (In fact, the ghost node for $\mathcal{T}$ ensures that the root, once picked, will always contribute an additional boundary node). 
Therefore, the total number of boundary nodes in $\mathcal{T}$ while exploring $\mathcal{T}_2$ is at most $D_2+1$. 
Similar arguments hold for all other subtrees --- the maximum number of boundary nodes while exploring the $k$-th subtree will be at most $D_k+1$, which is upper bounded by $D_2+1$.

Therefore, the maximum number of boundary nodes encountered at any step while exploring $\mathcal{T}$ is
$B_G^*(\mathcal{T}) \leq \max(D_1,D_2+1)$. 
By definition, $D(\mathcal{T}) = \max(D_1,D_2+1)$. 
Therefore $B_G^*(\mathcal{T}) \leq D(\mathcal{T})$.
\end{proof}

Combining Lemmas \ref{lem:dvalue} and \ref{lem:boundary}, we have the following result.

\begin{lemma}
\label{prop:boundary}
The maximum number of boundary nodes at any step of the algorithm is logarithmic in the number of nodes, i.e., $B \leq \log_2(M) + 1$.
\end{lemma}

The previous lemma establishes the polynomial time complexity of the dynamic program for solving the generalized integer problem \eqref{eq:GWMC}.

We shall now prove that the exploration rule itself requires minimal computation. This will justify our earlier claim that the running time is determined solely by the value update rule.

\begin{lemma}
The running time of the graph exploration rule is $\bigO(M)$ for an $M$-node graph. 
\end{lemma}
\begin{proof}
The exploration rule can be algorithmically run in two loops. 
In the first, we compute all $D$-values of all the required subtrees in the graph. In the second loop, we find the exploration ordering using these $D$-values. Note that the subtrees encountered by our recursive $D$-value computing algorithm are exactly the same set of subtrees encountered by our exploration rule, which makes it possible to compute all the required $D$-values in a single loop. 

For computing $D$-values at a particular node, we use the formula $D = \max(D_1, D_2 + 1)$, where $D_1 \geq D_2 \geq D_3, D_4, \dots, D_k$. Thus, we need to find the largest and second largest $D$-values among the subtrees. For a node with $d$ children, this takes $\bigO(d)$ time. Since the values $D_1, D_2, \dots, D_k$ are obtained recursively, this is the only computation which needs to be performed at the current node. Hence, the total time required is proportional to $\sum_{v \in \mathcal{V}}\max(d(v),1) \leq 2M$, where $d(v)$ represents the number of children that node $v$ has. Hence, this loop runs in $\bigO(M)$ time.

Obtaining the exploration order is similar. We only need to find the subtree with the largest $D$-value at the current node, so that we can pick the subtrees in the right order. This takes $\bigO(d)$ time for a node with $d$ children, and hence $\bigO(M)$ time for all nodes. 
Since both the above steps are $\bigO(M)$, the graph exploration rule itself runs in $\bigO(M)$, i.e., linear time.
\end{proof}

\begin{theorem}
The proposed dynamic program solves the Weighted Maximum Coverage problem with an additional constraint on element sparsity for acyclic group structures. 
Its time complexity is $\bigO(M^2GK^2 )$, where $M$ is the number of groups, $G$ is the group sparsity budget and $K$ is the element sparsity budget.
\end{theorem}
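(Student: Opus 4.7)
The plan is to combine correctness of the value update rules with the logarithmic bound on boundary nodes from Prop~\ref{prop:boundary} so that both claims of the theorem follow together.

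\textbf{Correctness.} I would argue by induction on $m$, the number of nodes explored so far under the Node Picking Rule. The inductive invariant is that the stored quantity $F(\mathcal{S}_m, g, k, I_1, \ldots, I_M)$ equals the optimal value of the restricted WMC-with-element-budget problem in which only groups from $\mathcal{S}_m$ may be used, at most $g$ of them are selected, at most $k$ elements are collected, and the selection/rejection status of each currently explored boundary node is forced by its indicator (the value $I_j = 2$ meaning unconstrained). The base case $\mathcal{S} = \emptyset$ is trivial. For the inductive step, the two optimal-substructure properties stated at the beginning of Appendix~\ref{sec:dp_lp} imply that, under any optimal selection extended to $\mathcal{S}_{m+1}$, node $m+1$ is either (i) rejected, handled by Case~1; (ii) accepted with no overlap with previously selected nodes, handled by Case~2 via the split into $\ell$ elements from the new group and $K-\ell$ elements from the first $m$ nodes under a budget of $g-1$ explored groups; or (iii) accepted with overlap, handled by Case~3, where the cleaning operation $\mathcal{X}'_{m+1} = \mathcal{X}_{m+1} \setminus \bigcup_{j\in\mathcal{B}} \mathcal{X}_j$ prevents double counting of elements covered by boundary groups that the indicators declare as chosen. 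The condensation step collapses interior nodes that are no longer boundary using the identity noted in the appendix, which is merely the definition of the ``don't care'' state.

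\textbf{Complexity.} At any step the number of $F$-entries stored is at most $G \cdot K \cdot 2^{|\mathcal{B}_m|}$, since only the indicators of current boundary nodes take genuinely distinct values. By Prop~\ref{prop:boundary}, $|\mathcal{B}_m| \leq \log_2 M + 1$, so $2^{|\mathcal{B}_m|} \leq 2M$ and the number of stored values at any step is $O(MGK)$. Each update in Case~2 or Case~3 requires maximizing over $\ell \in \{1,\ldots,k\}$ and querying $H(\mathcal{X}'_{m+1},\ell)$, which one may precompute in $O(K)$ per node by pre-sorting the element weights within each group and maintaining a sorted list after the (assumed $O(1)$) cleaning operation. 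Hence the per-step work is $O(MGK^2)$, and summing over the $M$ node insertions yields the claimed bound $O(M^2 G K^2)$. The final answer is read off as $F(\GG, G, K, 2, \ldots, 2)$ once all nodes have been processed.

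\textbf{Main obstacle.} The delicate part will be verifying that the three-valued indicator bookkeeping is coherent across Cases~1--3 and the condensation step, so that the invariant on $F$ is preserved through each update. In particular, the cleaning step in Case~3 must remove exactly those elements of $\mathcal{X}_{m+1}$ lying in boundary groups currently forced to be selected by the indicators, and the reduction $g \mapsto g-1$, $k \mapsto k - \ell$ must be paired with the correct indicator assignment $I_{m+1} = 0$ on the already-computed sub-problem so that no choice of the new node is implicitly double-booked. Once this is established, the bound on $|\mathcal{B}_m|$ from Prop~\ref{prop:boundary} furnishes the polynomial running time without further work.
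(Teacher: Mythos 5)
Your proposal is correct and follows essentially the same route as the paper: the same three-case value update with boundary-node indicators and condensation, the same invocation of the $\log_2 M + 1$ bound on boundary nodes from Prop.~\ref{prop:boundary} to cap the stored table at $O(MGK)$ entries, and the same per-step cost of $O(MGK^2)$ from the maximization over $\ell$, summed over $M$ insertions to give $O(M^2GK^2)$. Your explicit inductive invariant on $F(\mathcal{S}_m,g,k,I_1,\ldots,I_M)$ makes the correctness argument slightly more formal than the paper's description, but it is the same argument.
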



\section{Dynamical programming for solving the hierarchical signal approximation problem \eqref{eq:hier}}
\label{sec:dp_hier}

Here we describe the dynamic program for solving the hierarchical signal approximation problem \eqref{eq:hier} and show that its time complexity is $\bigO(NK^2D)$, for general trees with maximum degree $D$ and $\bigO(NKD)$ for $D$-regular trees. Furthermore, its space complexity for $D$-regular trees is $\bigO (N \log_DK)$.

\subsection{Problem description}

Problem \eqref{eq:hier} can be equivalently rephrased as the following optimization problem. \\

{\em Rooted-Connected Subtree Problem:} Given a rooted tree $\mathcal{T}$ with each node having at most $D$ children, a non-negative real number (weight) assigned to every node and a positive integer $K$, choose a subset of its nodes forming a rooted-connected subtree that maximizes the sum of weights of the chosen elements, such that the number of selected nodes does not exceed $K$. \\
\indent In our case, \eqref{eq:hier}, the weight of a node is the square of the value of the component of the signal associated to that node. The proposed algorithm leverages the optimal substructure of the problem.

\subsection{Optimal substructure} 

Suppose that a particular node X belongs to the optimal $K$-node rooted-connected subtree. 
Consider the subtree $\mathcal{T}_{X,d}$ obtained by choosing X, $d$ of its children ($1\leq d \leq D$) and all descendants of these children. 
Consider the set of nodes $\mathcal{S}$ consisting of all the nodes of $\mathcal{T}_{X,d}$ which are also present in the optimal $K$-node rooted-connected subtree. 
Suppose there are $L$ nodes in $\mathcal{S}$. 
Then the nodes in $\mathcal{S}$ form the optimal $L$-node rooted-connected subtree at X, for the subgraph $\mathcal{T}_{X,d}$. 
See Fig.~\ref{fig:DP_hier_subproblem} for an example.

\begin{figure}[t]
\centering
\includegraphics[width=0.7\textwidth]{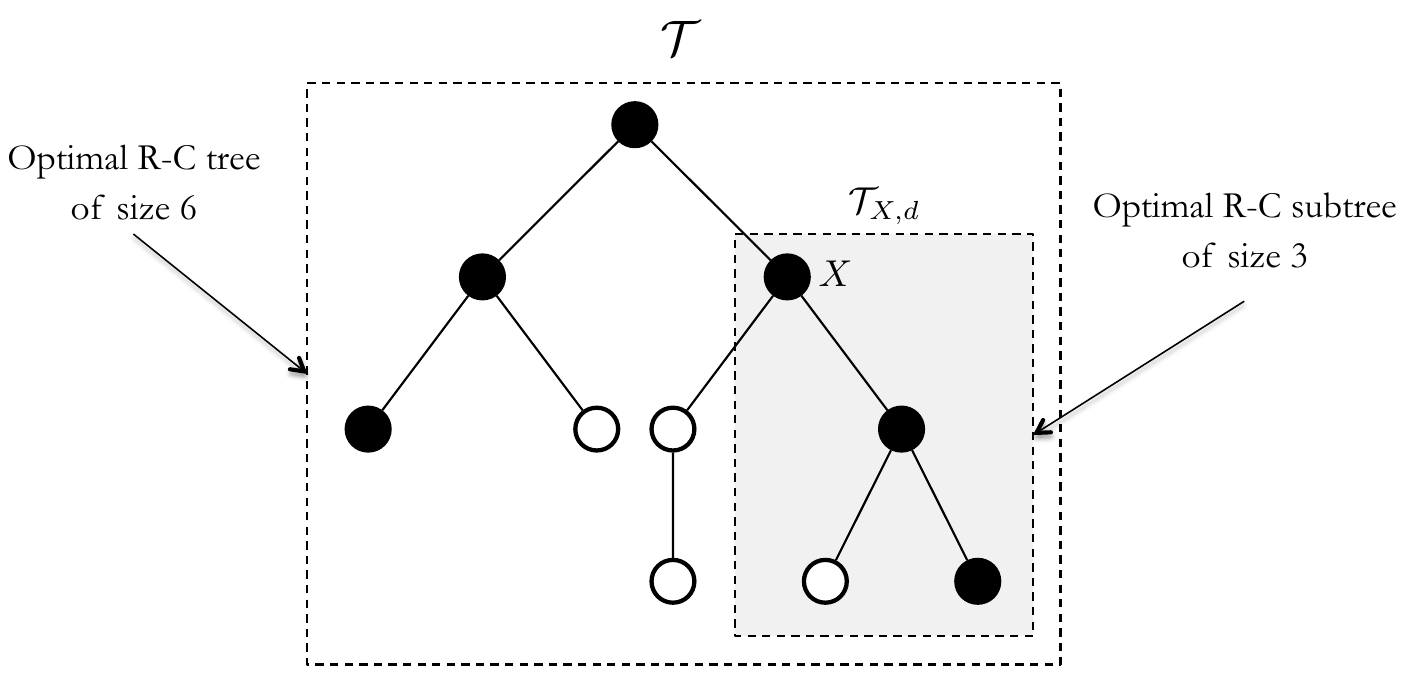}
\caption{Example of a nested subproblem in hierarchical groups model}
\label{fig:DP_hier_subproblem}
\end{figure}

\subsection{Dynamic Programming method.}  For every node X, we store the weight of the optimal $k$-node rooted-connected subtree at X, using only the nodes in the $d$ rightmost children of X and their descendants, for each $k$ and $d$ such that $1 \leq k \leq K$ and $1 \leq d \leq D$. 
We define a function $F(X,k,d)$, to store these optimal values. 
We start from the leaf nodes and move upwards, for each node assessing all its subtrees from right to left, eventually covering the entire tree. 
At the end, the optimal value will be given by $F(\text{root},K,D)$, that is the value of the best K-node rooted connected subtree of the root considering all its descendants.

{\it Base Case.} For every leaf node X and for all $1\leq k \leq K$ and $ 1\leq d \leq D$, we set $F(X,k,d) = \text{Weight}(X)$. 

{\it Inductive Case.} By induction, for every non-leaf node X, all the F-values are known for the descendants of X. 
Let $X_1, X_2, \ldots X_d$ be the $d$ children of X in the right-to-left order, where $1 \leq d \leq D$. 
Then, we compute the F-values of X using the following update rules. \\

{\bf Value Update Rule:}
\begin{enumerate}
\item For all $1\leq k \leq K$
$$
F(X,k,1) = \text{Weight}(X)+ F(X_1,k-1,D) \; .
$$
The optimal value for choosing a $k$-node subtree rooted at $X$, when only the rightmost child $X_1$ is allowed, equals the weight of $X$ itself (since $X$ must be chosen), plus the optimal value for choosing a rooted connected subtree with $k-1$ nodes from the rightmost child $X_1$.\\

\item For all $1 \leq k \leq K$ and  $1 < i \leq d$
$$
F(X,k,i) = \max_{0 \leq \ell \leq k-1} \left \{ F(X,k-\ell,i-1)  + F(X_i  ,\ell,D) \right \} \; .
$$
For choosing the best $k$-node rooted connected subtree from the rightmost $i$ children, choose a positive integer $\ell \leq k$, pick the best $k-\ell$-node subtree at $X$ by including the rightmost $i -1$ children and pick the remaining $\ell$ nodes from the subtree of the $i$th child. 
We then take the maximum over all $\ell$, $0 \leq \ell \leq k-1$ (since at least $1$ node must be chosen from the rightmost $i-1$ nodes, this node will be the root). \\

\item For all $1\leq k \leq K$ and $d < i \leq D$
$$
F(X,k,i)=F(X,k,d) \; .
$$
For convenience, when a node has only $d$ children, where $d$ is strictly less than $D$, we set F-values for cases involving more than $d$ children equal to the value for $d$ children.
\end{enumerate}

\subsection{Running Time}

\newtheorem*{th:dp_hier_gen}{Theorem~\ref{th:dp_hier_gen}}
\begin{th:dp_hier_gen}
Given a hierarchical group structure $\mathfrak{G}$, the time complexity of the dynamic programming algorithm is $\mathcal{O}(NK^2D)$, where $D$ is maximum number of children of a node in the tree.
\end{th:dp_hier_gen}
\begin{proof}
The main cost of the dynamic program is evaluating the second value update rule. 
Let $X_i$ be the i-th node in the tree, $d_i$ the number of its children $X_{i,1}, \ldots, X_{i,d_i}$. 
Let also $K_i$ be the cardinality of the tree that has $X_i$ as root and $K_{i,j}$ be the cardinality of the tree that has $X_{i,j}$ as root for $1 \leq i \leq N$ and $1 \leq j \leq d_i$. 
Given $X_i$, evaluating $F(X_i, k, j)$ for $1 \leq k \leq \min(K, K_i)$ and $1 \leq j \leq d_i$ requires $\min(k,K_{i,j})$ operations.
Therefore, overall we need to compute 
\begin{align*}
& \sum_{i=1}^N \sum_{k=1}^{\min(K,K_i)} \sum_{j=1}^{d_i} \min(k,K_{i,j}) \\
& \leq \sum_{i=1}^N \sum_{k=1}^{\min(K,K_i)} \sum_{j=1}^{D} \min(k,K_{i,j}) \\ 
& \leq \sum_{i=1}^N \sum_{k=1}^K Dk \\ 
& = \bigO ( NK^2D)
\end{align*}
values, each of which requires a simple operation.
\end{proof}

By leveraging the special structure of $D$-regular trees, it is possible to prove that the complexity of the dynamic program is linear in $K$.

\newtheorem*{prop:dp_hier_reg}{Proposition~\ref{prop:dp_hier_reg}}
\begin{prop:dp_hier_reg}
The time complexity of the dynamic program for $D$-regular trees is $\mathcal{O}(KDN)$.
\end{prop:dp_hier_reg}
\begin{proof}
The proof follows the arguments in \cite{cartis2013exact}.
Suppose there are $J$ levels in our tree, hence the maximum number of nodes that can be selected for a sub-tree with root in level $j$ is $S_j = 1+D+D^2+ \cdots+D^{J-j} = \frac{D^{J-j+1}-1}{D-1}$ where $j \in \{1,2, \ldots, J\}$.
At each step, the dynamic program considers selecting at most $K$ elements to form a sub-tree. 
Hence for a sub-tree with root at level $j$, we can select a maximum number of $ \mathcal{O}(l(j))=\mathcal{O}(\min(K,S_j))=\mathcal{O}(\min(K,D^{J-j}))$ for $D\geq 3$ and $\mathcal{O}(l(j))=\mathcal{O}(\min(K,D^{J-j+1}))$ for $D=2$. 
Note that we do not require any computation for level $J$. 
The update step of $F(X,k,i)=\max_{0\leq \ell \leq \min(k,l(j+1))}{F(X,k-\ell,i-1)+F(X_i,\ell,D)}$ then requires $\mathcal{O}(\min(k,l(j+1)))$ operations and for X in level $j$ this needs to be calculated $\forall 1 \leq k \leq l(j)$ and $1\leq i \leq D$. 
This leads to at most $\mathcal{O}\left (D\sum_{k=1}^{l(j)} \min(k,l(j+1)) \right )$ operations. 
By considering that at level $j$ there are at most $D^{j-1}$ nodes, the total number of operations can be written as
\begin{equation}
\label{eq:operations_sum}
\mathcal{O}\left ( \sum_{j=1}^{J} D^{j-1}D\sum_{k=1}^{l(j)} \min(k,l(j+1)) \right )
\end{equation}

Let $j'$ be such that $K \leq D^{J-j}$ for all $j < j'$. 
We then have $ j' = J - \lfloor \log_DK \rfloor$ and $\min(K, D^{J-j}) = K$ for $j < j'$ and $\min(K, D^{J-j}) = D^{J-j}$ for $j \geq j'$.
Hence we can break \eqref{eq:operations_sum} into

\begin{align*}
& \bigO \left ( \sum_{j=1}^{j'-1}D^j\sum_{k=1}^{K} k + \sum_{j=j'}^{J}D^j\sum_{k=1}^{D^{J-j}} \min(k,D^{J-j-1}) \right )\\
& = \bigO \left (\sum_{j=1}^{j'-1}D^jK^2 + \sum_{j=j'}^{J} D^j \left ( \sum_{k=1}^{D^{J-j-1}} k + \sum_{k=D^{J-j-1}+1}^{D^{J-j}} D^{J-j-1} \right ) \right ) \\
& = \bigO \left (\sum_{j=1}^{j'-1}D^jK^2 + \sum_{j=j'}^{J} D^j \left ( D^{2J - 2j - 2} + DD^{J-j-1} \right ) \right )\\
& \leq \bigO \left ( K^2 \frac{D^{j'}}{D-1} + D^{2J - 2} \sum_{j=j'}^{J} D^{-j} \right ) \\ 
& \leq \bigO \left ( K^2 \frac{D^{j'}}{D-1} + D^{2J - 2} \frac{D^{-j'}}{1-D^{-1}}) \right ) \\
& = \bigO \left ( K^2 \frac{D^{J-\lfloor \log_DK \rfloor}}{D-1} + K\frac{D^{J - 2}}{1-D^{-1}}) \right ) \\
& \leq \bigO \left ( K \frac{D^{J+1}}{D-1} + K\frac{D^{J - 2}}{1-D^{-1}}) \right ) \\
& = \bigO \left ( K \frac{D^{J+1}}{D-1} + K\frac{D^{J - 1}}{D-1}) \right ) \\
& = \bigO \left ( K \frac{D^{J+1}}{D-1} \right ) \; .
\end{align*}


For $D$-regular trees (with $D \geq 3$), we have $N=\dfrac{D^J-1}{D-1}\approx\dfrac{D^J}{D-1}$ 
, so that the time complexity will be $\mathcal{O} (KDN)$. 
When $D=2$, we can follow the same steps to show that the complexity is $\mathcal{O}(KD^2N)$.
But for small values of $D$, $\mathcal{O}(ND^2K)=\mathcal{O}(NDK)$. 
Hence we can say that the overall complexity is $\mathcal{O} (NDK)$.
\end{proof}

\subsection{Space Complexity}

\newtheorem*{prop:dp_hier_reg_space}{Proposition~\ref{prop:dp_hier_reg_space}}
\begin{prop:dp_hier_reg_space}
The memory complexity of the dynamic program for D-regular trees is $\mathcal{O}(N\log_DK)$ for our implementation.
\end{prop:dp_hier_reg_space}

\begin{proof}
Suppose there are total of $J \geq 1$ levels in our tree. 
Hence the maximum number of nodes that can be selected for a sub-tree with root in level $j$ is  $\ell(j)=\min(K,1+D+D^2+..+D^{J-j})=\min(K,\frac{D^{J+1-j}-1}{D-1})$ or  $\mathcal{O}(\ell(j)) =\mathcal{O}(\min(K,D^{J-j}))$, where $j \in \{1,2, \ldots,J\}$. 
Let $N_j = D^{j-1}$ be the number of nodes at level $j$.

In order to recover the optimal selection of nodes from the dynamic program, we use a standard backtracking procedure: for each node, we store the number of selected nodes in each of its subtrees ($D$ numbers) in the optimal selection for $1 \leq k \leq \ell(j)$.

Hence, the total memory required is
\begin{equation}
\bigO \left ( \sum_{j=1}^{J} D\ell(j) N_j \right ) = \bigO \left ( \sum_{j=1}^{J} \min(K,D^{J-j})D^j \right )
\end{equation}

Let $j'$ be such that $K \leq D^{J-j}$ for all $j < j'$. 
We then have $ j' = J - \lfloor \log_DK \rfloor$ and $\min(K, D^{J-j}) = K$ for $j < j'$ and $\min(K, D^{J-j}) = D^{J-j}$ for $j \geq j'$.

We can now write

\begin{align}
& = \bigO \left ( \sum_{j=1}^{j'-1} K D^j + \sum_{j=j'}^{J} D^{J-j}*D^j \right )\\
& = \bigO \left ( K \sum_{j=1}^{j'-1} D^j + \sum_{j=j'}^{J} D^J \right )\\
& = \bigO \left ( K D^{j'}) + D^J (J-j') \right )  \\
& \leq \bigO \left ( K D^{J+1-\log_DK} + D^J \log_DK \right ) \\
& = \bigO \left ( D^{J+1} + D^J \log_DK \right ) \\
& = \bigO \left ( N \log_DK \right ) \text{ where } N = \mathcal{O} (D^{J}) 
\end{align}
\end{proof}

\section*{Acknowledgements}
We would like to sincerely thank the anonymous reviewers for their detailed and constructive observations and criticisms.
We also thank Nikhil Rao for providing the code for block signal recovery with the Latent Group Lasso approach.

\bibliographystyle{IEEEtran}
\bibliography{groupsparse}

\begin{IEEEbiographynophoto}{Luca Baldassarre}
received his M.Sc. in Physics in 2006 and his Ph.D. in Machine Learning in 2010 at the University of Genoa, Italy. 
He then joined the Computer Science Department of University College London, UK,  to work with Prof. Massimiliano Pontil on structured sparsity models for machine learning and convex optimization.
Currently he is with the LIONS of Prof. Volkan Cevher at the \'Ecole Polytechnique F\'ed\'erale de Lausanne (EPFL), Switzerland. 
His research interests include model-based machine learning and compressive sensing and optimization.
\end{IEEEbiographynophoto}

\begin{IEEEbiographynophoto}{Nirav Bhan}
is currently a second year graduate student in EECS at Massachusetts Institute of Technology. He is member of the Laboratory of Information and Decision Systems (LIDS). His interests are in optimization, machine learning, graphical models, and applying mathematics to solve problems. Prior to being a graduate student, Nirav obtained a B.Tech degree in Electrical Engineering along with a Minor in Computer Science, from the Indian Institute of Technology-Bombay. He has worked as a research assistant at LIONS, EPFL, during the period of May to July, 2012.
\end{IEEEbiographynophoto}

\begin{IEEEbiographynophoto}{Volkan Cevher}
received his BSc degree (valedictorian) in Electrical Engineering from Bilkent University in 1999, and his PhD degree in Electrical and Computer Engineering from Georgia Institute of Technology in 2005. He held Research Scientist positions at University of Maryland, College Park during 2006-2007 and at Rice University during 2008-2009. Currently, he is an Assistant Professor at \'Ecole Polytechnique F\'ed\'erale de Lausanne and a Faculty Fellow at Rice University. He coauthored (with C. Hegde and M. Duarte) the Best Student Paper at the 2009 International Workshop on Signal Processing with Adaptive Sparse Structured Representations (SPARS). In 2011, he received an ERC Junior award. His research interests include signal processing theory, machine learning, graphical models, and information theory.
\end{IEEEbiographynophoto}

\begin{IEEEbiographynophoto}{Anastasios Kyrillidis}
received his 5-year diploma and M.Sc. in Electronic and Computer Engineering from Technical University of Crete in 2008 and 2010, respectively, and his PhD in Electrical and Computer Engineering from \'Ecole Polytechnique F\'ed\'erale de Lausanne in 2014. Currently, he is a Simons Fellowship PostDoc researcher at the WNCG Group at University of Texas at Austin. His research interests include convex and non-convex optimization, machine learning and high-dimensional data analysis and statistics.
\end{IEEEbiographynophoto}

\begin{IEEEbiographynophoto}{Siddhartha Satpathi}
 is expected to graduate in July 2015, with a B.Tech+M.Tech degree in Electrical Engineering and a minor in Computer Science. He has worked as a research intern at EPFL, Switzerland, during the period of May to July, 2013. His interests are in compressive sensing, greedy algorithms and energy harvesting communication systems.
 \end{IEEEbiographynophoto}




\end{document}